\newtheorem{definition}{Definition}
\newtheorem{theorem}{Theorem}[section]
\newtheorem{remark}{Remark}
\newtheorem{lemma}[theorem]{Lemma}
\title{Technical Report: A Generalized Matching Pursuit Approach for Graph-Structured Sparsity}
\author{Feng Chen, Baojian Zhou \\ 
Computer Science Department, University at Albany -- SUNY \\
1400 Washington Avenue, Albany, NY, USA\\
\{fchen5, bzhou6\}@albany.edu}
\begin{document}
\maketitle

\begin{abstract}

Sparsity-constrained optimization is an important and challenging problem that has wide applicability in data mining, machine learning, and statistics.  In this paper, we focus on sparsity-constrained optimization in cases where the cost function is a general nonlinear function and, in particular, the sparsity constraint is defined by a \textbf{graph-structured sparsity} model. Existing methods explore this problem in the context of sparse estimation in linear models. To the best of our knowledge, this is the first work to present an efficient approximation algorithm, namely, \textsc{Graph}-structured \textsc{M}atching \textsc{P}ursuit (\textsc{Graph}-\textsc{Mp}), to optimize a general nonlinear function subject to graph-structured constraints. We prove that our algorithm enjoys the strong guarantees analogous to those designed for linear models in terms of convergence rate and approximation accuracy. As a case study, we specialize \textsc{Graph}-\textsc{Mp} to optimize a number of well-known graph scan statistic models for the connected subgraph detection task, and empirical evidence demonstrates that our general algorithm performs superior over state-of-the-art methods that are designed specifically for the task of connected subgraph detection.  

\end{abstract}


\section{Introduction}

In recent years, that is a growing demand on efficient computational methods for analyzing high-dimensional data  in a variety of applications such as bioinformatics, medical
imaging, social networks, and astronomy.  In many settings, sparsity has been shown effective to model latent structure in high-dimensional data and at the same time remain a mathematically tractable concept. Beyond the ordinary, extensively studied, sparsity model, a variety of \textbf{structured sparsity models} have been proposed in the literature, such as the sparsity models defined through trees~\cite{hegde2014fast}, groups~\cite{jacob2009group}, clusters~\cite{huang2011learning}, paths~\cite{asterisstay2015icml}, and connected subgraphs~\cite{hegde2015nearly}. 
These sparsity models are designed to capture the interdependence of the locations of the non-zero components via prior knowledge, and are considered in the general sparsity-constrained optimization problem:
\begin{eqnarray}
\min_{{\bf x} \in \mathbb{R}^n} f({\bf x})\ \ s.t. \ \ \text{supp}({\bf x}) \in \mathbb{M}, \label{problem:general}
\end{eqnarray}
where $f: \mathbb{R}^n \rightarrow \mathbb{R}$ is a differentiable cost function and the sparsity model $\mathbb{M}$ is defined as a family of structured supports: $\mathbb{M} = \{S_1, S_2, \cdots, S_L\}$, where $S_i \subseteq [n]$ satisfies a certain structure property (e.g., trees, groups, clusters). The original $k$-sparse recovery problem corresponds to the particular case where the model $\mathbb{M} = \{S\subseteq [n]\ |\ |S| \le k\}$. 


The methods that focus on general nonlinear cost functions fall into two major categories, including \textbf{structured sparsity-inducing norms based} and \textbf{model-projection based}, both of which often assume that the cost function $f({\bf x})$ satisfies a certain convexity/smoothness condition, such as \textit{Restricted Strong Convexity/Smoothness} (RSC/RSS) or \textit{Stable Mode-Restricted Hessian} (SMRH). In particular, the methods in the first category replace the structured sparsity model with regularizations by a sparsity-inducing norm that is typically non-smooth and non-Euclidean~\cite{bach2012structured}. The methods in the second category decompose Problem~(\ref{problem:general}) into an unconstrained subproblem and a model projection oracle that finds the best approximation of an arbitrary ${\bf x}$ in the model $\mathbb{M}$: \[\text{P}({\bf x}) = \arg \min_{{\bf x}^\prime \in \mathbb{R}^n} \|{\bf x} - {\bf x}^\prime\|_2^2\ \ \ s.t. \ \ \ \text{supp}({\bf x}^\prime) \in \mathbb{M}.\] A number of methods are proposed specifically for the k-sparsity model $\mathbb{M} = \{S\subseteq [n]\ |\ |S| \le k\}$, including the forward-backward algorithm~\cite{zhang2009adaptive}, the gradient descent algorithm~\cite{tewari2011greedy},  the gradient hard-thresholding algorithms~\cite{yuan2014icml,bahmani2013greedy,jain2014iterative}, and the Newton greedy pursuit algorithm~\cite{yuan2014newton}. A limited number of methods are proposed for other types of structured sparsity models via projected gradient descent, such as the union of subspaces~\cite{blumensath2013compressed} and the union of nested subsets~\cite{bahmani2016learning}. 

In this paper, we focus on general nonlinear optimization subject to graph-structured sparsity constraints. Our approach applies to data with an underlying graph structure in which nodes corresponding to $\text{supp}({\bf x})$ form a small number of connected components. By a proper choice of the underlying graph, several other structured sparsity models such as the ``standard'' $k$-sparsity, block sparsity, cluster sparsity, and tree sparsity can be encoded as special cases of graph-structured sparsity~\cite{hegde2015fast}. 

We have two key observations: 1) \textbf{Sparsity-inducing norms.} There is no known sparsity-inducing norm that is able to capture graph-structured sparsity. The most relevant norm is generalized fused lasso~\cite{xin2014efficient} that enforces the smoothness between neighboring entries in ${\bf x}$, but does not have  fine-grained control over the number of connected components. Hence, existing methods based on sparsity-inducing norms are not directly applicable to the problem to be optimized.  2) \textbf{Model projection oracle.} There is  no exact model projection oracle for a graph-structured sparsity model, as this exact projection problem is NP-hard due to a reduction from the classical Steiner tree problem~\cite{hegde2015nearly}. As most existing model-projection based methods assume an exact model projection oracle, they are not directly applicable here as well. To the best of our knowledge, there is only one recent approach that admits inexact projections for a graph-structured sparsity model by assuming ``head'' and ``tail''  approximations for the projections, but is only applicable to linear regression problems~\cite{hegde2015nearly}. This paper will generalize this approach to optimize general nonlinear functions. The main contributions of our study are summarized as follows:
\begin{itemize}
\item \textbf{Design of an efficient approximation algorithm.} A new and efficient algorithm, namely, \textsc{Graph}-\textsc{Mp}, is developed to approximately solve Problem~(\ref{problem:general}) with a differentiable cost function and a graph-structured sparsity model. We show that \textsc{Graph}-\textsc{Mp} reduces to a state-of-the-art algorithm for graph-structured compressive sensing and linear models, namely, \textsc{Graph}-\textsc{Cosamp}, when $f({\bf x})$ is  a least square loss function. 
\item \textbf{Theoretical analysis and connections to existing methods.} The convergence rate and accuracy of the proposed \textsc{Graph}-\textsc{Mp} are analyzed under a condition of $f({\bf x})$ that is weaker than popular conditions such as RSC/RSS and SMRH. We demonstrate that  \textsc{Graph}-\textsc{Mp} enjoy strong guarantees analogous to \textsc{Graph}-\textsc{Cosamp} on both convergence rate and accuracy. 
\item \textbf{Compressive experiments to validate the effectiveness and efficiency of the proposed techniques.} The proposed \textsc{Graph}-\textsc{Mp} is applied to optimize a variety of graph scan statistic models for the task of connected subgraph detection. Extensive experiments demonstrate that \textsc{Graph}-\textsc{Mp} performs superior over state-of-the-art methods that are customized for the task of connected subgraph detection on both running time and accuracy. 
\end{itemize}

The rest of this paper is organized as follows. Section 2 introduces the graph-structured sparsity model. Section 3 formalizes the problem and presents an efficient algorithm \textsc{Graph}-\textsc{Mp}. Sections 4 and 5 present theoretical analysis. Section 6 gives the applications of \textsc{Graph}-\textsc{Mp}. Experiments are presented in Section 7, and Section 8 describes future work. 





\section{Graph-Structured Sparsity Model}
Given an underlying graph $\mathbb{G} = (\mathbb{V}, \mathbb{E})$ defined on the coefficients of the unknown vector ${\bf x}$, where $\mathbb{V} = [n]$ and $\mathbb{E} \subseteq \mathbb{V}\times \mathbb{V}$, a graph-structured sparsity model has the form:
\begin{small}
\begin{eqnarray}
\mathbb{M}(k, g) = \{S \subseteq \mathbb{V}\ |\ |S| \le k, \gamma(S) = g\},
\end{eqnarray}
\end{small}
\noindent where $k$ refers to an upper bound of the sparsity (total number of nodes) of $S$ and $\gamma(S) = g$ refers to the maximum number of connected components formed by the forest induced by $S$: \begin{small}$\mathbb{G}_{S} = (S, \mathbb{E}_S)$\end{small}, where \begin{small}$\mathbb{E}_S = \{(i, j) \ |\ i, j \in S, (i, j) \in \mathbb{E}\}$\end{small}. The corresponding model projection oracle is defined as 
\begin{eqnarray}
\text{P}({\bf x}) = \arg\min_{{\bf x}^\prime \in \mathbb{R}^n}  \|{\bf x} - {\bf x}^\prime\|_2^2\ \ s.t.\ \ \text{supp}({\bf x}^\prime) \in \mathbb{M}(k,g). \label{eqn:projection}
\end{eqnarray}
Solving Problem~(\ref{eqn:projection}) exactly is NP-hard due to a reduction from the classical Steiner tree problem. Instead of solving (\ref{eqn:projection}) exactly, two nearly-linear time approximation algorithms with the following complementary approximation guarantees are proposed in~\cite{hegde2015nearly}: 
\begin{itemize}[leftmargin=*]
\item \textbf{Tail approximation} ($\text{T}({\bf x})$):  Find \begin{small}$S\in \mathbb{M}(k_T, g)$\end{small} such that 
\begin{small}
\begin{eqnarray}
\|{\bf x} - {\bf x}_S\|_2 \le c_T \cdot \min_{S^\prime \in \mathbb{M}(k, g)} \|{\bf x} - {\bf x}_{S^\prime}\|_2,
\end{eqnarray}
\end{small}
\noindent where $c_T = \sqrt{7}$ and $k_T=5k$. 
\item \textbf{Head approximation} ($\text{H}({\bf x})$): Find \begin{small}$S\in \mathbb{M}(k_H,g)$\end{small} such that 
\begin{small}
\begin{eqnarray}
\|{\bf x}_S\|_2 \ge c_H\cdot \max_{S^\prime \in \mathbb{M}(k, g)} \|{\bf x}_{S^\prime}\|_2,
\end{eqnarray}
\end{small}

\vspace{-4mm}
\noindent where $c_H = \sqrt{1/14}$ and $k_H = 2k$.
\end{itemize}
If $c_T = c_H = 1$, then $\text{T}({\bf x}) = \text{H}({\bf x}) = S$ provides the exact solution of the model projection oracle: $\text{P}({\bf x}) = {\bf x}_S$, which indicates that the approximations stem from the fact that $c_T > 1$ and $c_H < 1$. We note that these two approximations originally involve additional budgets ($B$) based on edge weights, which are ignored in this paper by setting unit edge weights and $B = k - g$. 

\vspace{1mm}
\noindent \textbf{Generalization:} The above graph-structured sparsity model is defined based on the number of connected components in the forest induced by $S$. This model can be generalized to graph-structured sparsity models that are defined based on other graph topology constraints, such as density, k-core, radius, cut, and various others, as long as their corresponding head and tail approximations are available.


\section{Problem Statement and Algorithm}

Given the graph-structured sparsity model, $\mathbb{M}(k, g)$, as defined above, the sparsity-constrained optimization problem to be studied is formulated as:
\begin{small}
\begin{eqnarray}
\min_{{\bf x} \in \mathbb{R}^n} f({\bf x})\ \ s.t. \ \ \text{supp}({\bf x}) \in \mathbb{M}(k, g),
\end{eqnarray}
\end{small}
\noindent where $f: \mathbb{R}^n \rightarrow \mathbb{R}$ is a differentiable cost function, and the upper bound of sparsity $k$ and the maximum number of connected components $g$ are predefined by users. 

Hegde et al. propose \textsc{Graph}-\textsc{Cosamp}, a variant of \textsc{Cosamp}~\cite{hegde2015nearly} to optimize the least square cost function $f({\bf x}) = \|{\bf y}-{\bf A}{\bf x}\|_2^2$ based on the head and tail approximations. The authors show that \textsc{Graph}-\textsc{Cosamp} achieves an
information-theoretically optimal sample complexity
for a wide range of parameters.
In this paper, we genearlize \textsc{Graph}-\textsc{Cosamp} and propose a new algorighm named as \textsc{Graph}-\textsc{Mp} for Problem (6), 
as shown in Algorithm~\ref{alg:Graph-MP}. The first step (Line 3) in each iteration, ${\bf g} = \nabla f({\bf x}^i)$,
evaluates the gradient of the cost function at the current estimate. Then a subset of nodes are identified via head
approximation, $\Gamma = \text{H}({\bf g})$, that returns a support set with head value at least a constant fraction of the optimal head value, in which pursuing the minimization
will be most effective. This subset is then merged with the support of the current estimate to obtain the
merged subset $\Omega$, over which the function f is minimized to produce an intermediate estimate, ${\bf b} = \arg\min_{{\bf x} \in \mathbb{R}^n} f({\bf x})\ \ s.t. \ \ {\bf x}_{\Omega^c} = 0$. Then a subset of nodes are identified via tail
approximation, $B = \text{T}({\bf b})$, that returns a support set with tail value at most a constant times larger than the optimal tail value. The iterations terminate when the halting condition holds. There are two popular options to define the halting condition: 1) the change of the cost function from the previous iteration is less than a threshold ($|f({\bf x}^{i+1}) - f({\bf x}^i)| \le \epsilon$); and 2) the change of the estimated minimum from the previous iteration is less than a threshold ($\|{\bf x}^{i+1} - {\bf x}^i\|_2 \le \epsilon$), where $\epsilon$ is a predefined threshold (e.g., $\epsilon = 0.001$). 

\begin{algorithm}
\caption{\textsc{Graph}-\textsc{Mp}}
\label{algorithm1}
\begin{algorithmic}[1]
\State $i = 0$, ${\bf x}^i = 0$;
\Repeat
\State ${\bf g} = \nabla f({\bf x}^i)$;
\State $\Gamma = \text{H}({\bf g})$;
\State $\Omega = \Gamma \cup \text{supp}({\bf x}^i)$
\State ${\bf b} = \arg\min_{{\bf x} \in \mathbb{R}^n} f({\bf x})\ \ s.t. \ \ {\bf x}_{\Omega^c} = 0$
\State $B = \text{T}({\bf b})$;
\State ${\bf x}^{i+1} = {\bf b}_B$
\Until{halting condition holds}
\State {\bf return} ${\bf x}^{i+1}$
\end{algorithmic}\label{alg:Graph-MP}
\end{algorithm}

\section{Theoretical Analysis of \textsc{Graph}-\textsc{Mp} under SRL condition}
In this section, we give the definition of Stable Restricted Linearization (SRL)~\cite{bahmani2013greedy} and we show that our \textsc{Graph-Mp} algorithm enjoys a theorectial approximation guarantee under this SRL condition.

\begin{definition}[Restricted Bregman Divergence~\cite{bahmani2013greedy}]
We denote the restricted Bregman divergence of $f$ as $B_f \Big(\cdot \| \cdot \Big)$. The restricted Bregman divergence of $f: \mathbb{R}^p \rightarrow \mathbb{R}$ between points ${\bf x}$ and ${\bf y}$ is defined as
\begin{equation}
{B}_{f} \Big( {\bf x} \| {\bf y} \Big) = f({\bf x}) - f({\bf y}) - \langle \nabla_f({\bf y}), {\bf x} - {\bf y} \rangle,
\end{equation}
where $\nabla_f({\bf y})$ gives a restricted subgradient of $f$. We say vector $\nabla f({\bf x})$ is a restricted subgradient of $f: \mathbb{R}^p \rightarrow \mathbb{R}$ at point ${\bf x}$ if
\begin{equation}
f({\bf x} + {\bf y}) - f({\bf x}) \geq \langle \nabla f({\bf x}), {\bf y} \rangle
\end{equation}
holds for all $k$-sparse vectors ${\bf y}$.
\label{definition_restricted_bregman_divergence}
\end{definition}

\begin{definition}[Stable Restricted Linearization (SRL)~\cite{bahmani2013greedy}] Let ${\bf x}$ be a $k$-sparse vector in $\mathbb{R}^p$. For function $f: \mathbb{R}^p \rightarrow \mathbb{R}$ we define the functions
\begin{equation}
\alpha_k ({\bf x}) = \textbf{sup} \Bigg\{ \frac{1}{\| {\bf y} \|_2^2} {B}_{f} ({\bf x} + {\bf y} \| {\bf x}) \Big| {\bf y} \neq {\bf 0} \text{ and } |supp({\bf x}) \cup supp({\bf y}) | \leq k \Bigg\}
\end{equation}
and
\begin{equation}
\beta_k ({\bf x}) = \textbf{inf} \Bigg\{ \frac{1}{\| {\bf y} \|_2^2} {B}_{f} ({\bf x} + {\bf y} | {\bf x}) \Big| {\bf y} \neq {\bf 0} \text{ and } |supp({\bf x}) \cup supp({\bf y}) | \leq k \Bigg\}
\end{equation}
 Then $f(\cdot)$ is said to have a Stable Restricted Linearization with constant $\mu_k$, or $\mu_k$-\textbf{SRL} if $\frac{\alpha_k({\bf x})}{\beta_{k} ({\bf x})} \leq \mu_k$
 \label{definition_restricted_linearization}
\end{definition}

\begin{lemma}
Denote $\Delta = {\bf x_1} - {\bf x_2}, \Delta' = \nabla f({\bf x_1}) - \nabla f({\bf x_2})$, and let $ r \geq | supp({\bf x_1}) \cup supp({\bf x_2})|$, $\bar{\alpha}_l({\bf x_1,x_2}) = \alpha_l ({\bf x_1}) + \alpha_l ({\bf x_2})$, $\bar{\beta}_l ({\bf x_1,x_2}) = \beta_l ({\bf x_1}) + \beta_l ({\bf x_2})$, $\bar{\gamma}_l ({\bf x_1,x_2}) = \bar{\alpha}_l({\bf x_1,x_2}) - \bar{\beta}_l({\bf x_1,x_2})$. For any $R' \subseteq R = supp({\bf x_1} - {\bf x_2})$, we have 
\begin{align}
\| {\Delta'}_{R'} \| &\leq \bar{\alpha}_r \| \Delta_{R'}\|_2 + \bar{\gamma}_r\| \Delta \|_2  \\
\| {\Delta'}_{R'} \| &\geq \bar{\beta}_r \| \Delta_{R'}\|_2 - \bar{\gamma}_r\| \Delta_{R\backslash R'} \|_2 \label{equation_11_0}
\end{align}
\label{lemma_4.2}
\end{lemma}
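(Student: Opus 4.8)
\section*{Proof proposal for Lemma~\ref{lemma_4.2}}

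The plan is to reduce both inequalities to a \emph{restricted Bregman sandwich} plus two elementary identities. By Definition~\ref{definition_restricted_linearization} applied at base point ${\bf x}_i$ with sparsity level $r$, whenever $|\mathrm{supp}({\bf x}_i)\cup\mathrm{supp}({\bf v})|\le r$ we have $\beta_r({\bf x}_i)\|{\bf v}\|_2^2\le B_f({\bf x}_i+{\bf v}\,\|\,{\bf x}_i)\le\alpha_r({\bf x}_i)\|{\bf v}\|_2^2$, and the restricted-subgradient inequality of Definition~\ref{definition_restricted_bregman_divergence} makes all these divergences nonnegative, so $0\le\beta_r({\bf x}_i)\le\alpha_r({\bf x}_i)$ and hence $\alpha_r({\bf x}_i)-\beta_r({\bf x}_i)\le\bar\gamma_r$. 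To connect $\Delta'$ with these scalars I would use, first, the symmetric identity $\langle\Delta',\Delta\rangle=B_f({\bf x}_1\|{\bf x}_2)+B_f({\bf x}_2\|{\bf x}_1)$ (expand and cancel function values), which together with the sandwich and $\mathrm{supp}(\Delta)\subseteq\mathrm{supp}({\bf x}_1)\cup\mathrm{supp}({\bf x}_2)$ gives the ``restricted co-coercivity'' bound $\bar\beta_r\|\Delta\|_2^2\le\langle\Delta',\Delta\rangle\le\bar\alpha_r\|\Delta\|_2^2$ (the exact analogue of the RIP bound used in the linear case); and second, the cross-term identity, valid for every ${\bf u}$ with $\mathrm{supp}({\bf u})\subseteq R=\mathrm{supp}(\Delta)$:
\[
\langle\Delta',{\bf u}\rangle=B_f({\bf x}_1\|{\bf x}_2)-B_f({\bf x}_1-{\bf u}\,\|\,{\bf x}_2)+B_f({\bf x}_1-{\bf u}\,\|\,{\bf x}_1),
\]
obtained by writing ${\bf x}_1-{\bf u}={\bf x}_2+(\Delta-{\bf u})$, so that the perturbation in each divergence lies in $\mathrm{supp}({\bf x}_1)\cup\mathrm{supp}({\bf x}_2)$ and every support-union has size $\le r$.

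For the upper bound, assume $\Delta'_{R'}\neq{\bf 0}$ (otherwise it is trivial) and set $\hat{\bf z}=\Delta'_{R'}/\|\Delta'_{R'}\|_2$, a unit vector supported in $R'\subseteq R$. Substituting ${\bf u}=c\hat{\bf z}$ ($c>0$) into the cross-term identity gives $c\|\Delta'_{R'}\|_2=B_f({\bf x}_1\|{\bf x}_2)-B_f({\bf x}_1-c\hat{\bf z}\|{\bf x}_2)+B_f({\bf x}_1-c\hat{\bf z}\|{\bf x}_1)$; I would bound the first and third divergences from above and the subtracted middle one from below by the sandwich, expand $\|\Delta-c\hat{\bf z}\|_2^2=\|\Delta\|_2^2-2c\langle\Delta_{R'},\hat{\bf z}\rangle+c^2$, and use $\langle\Delta_{R'},\hat{\bf z}\rangle\le\|\Delta_{R'}\|_2$. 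Dividing by $c$ leaves a bound of the shape $\|\Delta'_{R'}\|_2\le(\text{const}_1)\|\Delta_{R'}\|_2+A/c+(\text{const}_2)c$ with $A\propto\|\Delta\|_2^2$, so choosing $c$ of order $\|\Delta\|_2$ (or optimizing) and applying AM--GM turns $A/c+(\text{const}_2)c$ into a multiple of $\|\Delta\|_2$, giving the first claimed inequality once the constants are collapsed into $\bar\alpha_r$ and $\bar\gamma_r$.

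For the lower bound, assume $\Delta_{R'}\neq{\bf 0}$ (otherwise vacuous). Since $\|\Delta'_{R'}\|_2\,\|\Delta_{R'}\|_2\ge\langle\Delta'_{R'},\Delta_{R'}\rangle=\langle\Delta',\Delta_{R'}\rangle$, it is enough to lower-bound $\langle\Delta',\Delta_{R'}\rangle$. I would write $\langle\Delta',\Delta_{R'}\rangle=\langle\Delta',\Delta\rangle-\langle\Delta',\Delta_{R\backslash R'}\rangle$ and substitute the symmetric identity for the first term and the cross-term identity (with ${\bf u}=\Delta_{R\backslash R'}$) for the second, whereupon the two copies of $B_f({\bf x}_1\|{\bf x}_2)$ cancel and one is left with
\[
\langle\Delta',\Delta_{R'}\rangle=B_f({\bf x}_2\|{\bf x}_1)+B_f({\bf x}_2+\Delta_{R'}\|{\bf x}_2)-B_f({\bf x}_1-\Delta_{R\backslash R'}\|{\bf x}_1),
\]
the last two divergences evaluating $f$ at the common point ${\bf x}_2+\Delta_{R'}={\bf x}_1-\Delta_{R\backslash R'}$. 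Lower-bounding the first two and upper-bounding the last by the sandwich, then using $\|\Delta\|_2^2=\|\Delta_{R'}\|_2^2+\|\Delta_{R\backslash R'}\|_2^2$ and $\alpha_r({\bf x}_i)-\beta_r({\bf x}_i)\le\bar\gamma_r$, gives $\langle\Delta',\Delta_{R'}\rangle\ge\bar\beta_r\|\Delta_{R'}\|_2^2-\bar\gamma_r\|\Delta_{R\backslash R'}\|_2^2$; dividing by $\|\Delta_{R'}\|_2$ yields $\|\Delta'_{R'}\|_2\ge\bar\beta_r\|\Delta_{R'}\|_2-\bar\gamma_r\|\Delta_{R\backslash R'}\|_2^2/\|\Delta_{R'}\|_2$, which implies the stated second inequality when $\|\Delta_{R\backslash R'}\|_2\le\|\Delta_{R'}\|_2$.

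The hard part will be the constant bookkeeping. The Bregman sandwich is quadratic in the perturbation while the target right-hand sides are linear in the norms, so the argument succeeds only if (i) $\langle\Delta',\Delta\rangle$ is kept exact rather than sandwiched, so that the $B_f({\bf x}_1\|{\bf x}_2)$ terms cancel and no spurious $\|\Delta\|_2^2$ survives, and (ii) the free scalar $c$ and the AM--GM step in the upper-bound argument are arranged so the coefficients collapse to exactly $\bar\alpha_r,\bar\beta_r,\bar\gamma_r$ rather than some larger combination of $\alpha_r({\bf x}_i),\beta_r({\bf x}_i)$ --- this likely needs a more symmetric use of the two base points ${\bf x}_1,{\bf x}_2$ than the one-sided substitution sketched above, and closing the remaining case $\|\Delta_{R\backslash R'}\|_2>\|\Delta_{R'}\|_2$ in the lower bound by a similar symmetric estimate. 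Throughout one must also verify the support-size accounting --- every perturbation that appears ($\Delta$, $\Delta_{R'}$, $\Delta_{R\backslash R'}$, $\Delta-c\hat{\bf z}$, and their negatives) is supported within $R\subseteq\mathrm{supp}({\bf x}_1)\cup\mathrm{supp}({\bf x}_2)$, so every support-union stays $\le r$ and the definitions of $\alpha_r(\cdot),\beta_r(\cdot)$ genuinely apply --- and handle the degenerate cases $\Delta'_{R'}={\bf 0}$, $\Delta_{R'}={\bf 0}$ separately.
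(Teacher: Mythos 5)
Your route is genuinely different from the paper's: the paper does not rederive anything from the Bregman divergence, it simply quotes the two intermediate inequalities $\big|\bar\alpha_r\|\Delta_{R'}\|_2^2-\langle\Delta',\Delta_{R'}\rangle\big|\le\bar\gamma_r\|\Delta_{R'}\|_2\|\Delta\|_2$ and $\big|\|\Delta'_{R'}\|_2^2-\bar\alpha_r\langle\Delta',\Delta_{R'}\rangle\big|\le\bar\gamma_r\|\Delta'_{R'}\|_2\|\Delta\|_2$ from Bahmani et al., obtains the upper bound by completing a square, and cites the lower bound verbatim. Your two identities (the symmetric one for $\langle\Delta',\Delta\rangle$ and the cross-term one for $\langle\Delta',{\bf u}\rangle$) are correct and are exactly the machinery behind those quoted facts, and your support-size accounting is right. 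The upper-bound half of your argument is recoverable but not finished as written: the one-sided substitution gives $\|\Delta'_{R'}\|_2\le 2\beta_r({\bf x}_2)\|\Delta_{R'}\|_2+\gamma_r({\bf x}_2)\|\Delta\|_2^2/c+(\alpha_r({\bf x}_1)-\beta_r({\bf x}_2))c$, whose constants do not collapse to $\bar\alpha_r,\bar\gamma_r$ for any single choice of $c$ unless $\beta_r({\bf x}_1)\le\beta_r({\bf x}_2)$; however, averaging this with the same estimate written from the base point ${\bf x}_2$ (i.e.\ with ${\bf u}=-c\hat{\bf z}$ perturbing ${\bf x}_2$) yields $\bar\beta_r\|\Delta_{R'}\|_2+\tfrac{1}{2}\bar\gamma_r(\|\Delta\|_2^2/c+c)$, and $c=\|\Delta\|_2$ finishes it. You flagged the need for this symmetrization but did not carry it out.

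The genuine gap is the lower bound. Testing $\Delta'$ against $\Delta_{R'}$ via Cauchy--Schwarz yields $\|\Delta'_{R'}\|_2\ge\bar\beta_r\|\Delta_{R'}\|_2-\bar\gamma_r\|\Delta_{R\backslash R'}\|_2^2/\|\Delta_{R'}\|_2$, and the error term $\|\Delta_{R\backslash R'}\|_2^2/\|\Delta_{R'}\|_2$ is not dominated by $\|\Delta_{R\backslash R'}\|_2$ when $\|\Delta_{R\backslash R'}\|_2>\|\Delta_{R'}\|_2$; in that regime your inequality is strictly weaker than the claimed one and does not imply it (it degenerates as $\|\Delta_{R'}\|_2\to 0$ with $\Delta_{R\backslash R'}$ fixed, while the claimed right-hand side stays bounded). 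Symmetrizing over ${\bf x}_1,{\bf x}_2$ cannot repair this, because the defect comes from the choice of test vector, not the base point: pairing against $\Delta_{R'}$ forces a quadratic-over-linear error. To get an error that is linear in $\|\Delta_{R\backslash R'}\|_2$ you must test against a \emph{unit} vector supported on $R'$ (as you did for the upper bound), so that after dividing by the scale $c$ the Bregman terms contribute $O(\|\Delta_{R'}\|_2)$ and $O(\|\Delta_{R\backslash R'}\|_2)$ separately. As written, the second inequality of the lemma is established only on half of its domain, so the proposal is incomplete precisely on the part the paper itself outsources entirely to Bahmani et al.
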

\vspace{-6mm}
\begin{proof}
We can get the following properties 
\begin{align}
\Big| \bar{\alpha}_r \| \Delta_{R'} \|_2^2 - \langle \Delta',\Delta_{R'} \rangle \Big| &\leq \bar{\gamma}_r \| \Delta_{R'} \|_2 \| \Delta \|_2 
\label{equation_11_1}
\\
\Big|  \| {\Delta'}_{R'}\|_2^2 - \bar{\alpha}_r \langle \Delta',\Delta_{R'} \rangle \Big| &\leq \bar{\gamma}_r \| {\Delta'}_{R'} \|_2 \| \Delta \|_2\label{equation_12_1}
\end{align}
from \cite{bahmani2013greedy}, where $R'$ be a subset of $R = \text{supp}(\Delta)$. It follows from (\ref{equation_11_1}) and (\ref{equation_12_1}) that 
\begin{align}
\| {\Delta'}_{R'}\|_2^2 - \bar{\alpha}_r^2\|  \Delta_{R'}\|_2^2 &= \|  {\Delta'}_{R'}\|_2^2 - \bar{\alpha}_r \langle \Delta',  \Delta_{R'} \rangle + \bar{\alpha}_r \Big[ - \bar{\alpha}_r \| \Delta_{R'}\|_2^2 + \langle \Delta',  \Delta_{R'} \rangle \Big] \nonumber \\
&\leq \bar{\gamma}_r \|  {\Delta'}_{R'}\|_2 \| \Delta \|_2 + \bar{\alpha}_r \bar{\gamma}_r \|  \Delta_{R'} \|_2 \| \Delta \|_2 \nonumber .
\end{align}
It can be reformulated as the following
\begin{align}
 \|  {\Delta'}_{R'}\|_2^2 - \bar{\gamma}_r \|  {\Delta'}_{R'}\|_2 \| \Delta \|_2  &\leq  \bar{\alpha}_r^2\|  \Delta_{R'}\|_2^2 + \bar{\alpha}_r \bar{\gamma}_r \|  \Delta_{R'} \|_2 \| \Delta \|_2 \nonumber \\
\|  {\Delta'}_{R'}\|_2^2 - \bar{\gamma}_r \|  {\Delta'}_{R'}\|_2 \| \Delta \|_2 + \frac{1}{4}{\bar{\gamma}_r}^2 \| \Delta \|_2^2 &\leq \bar{\alpha}_r^2\|  \Delta_{R'}\|_2^2  + \bar{\alpha}_r \bar{\gamma}_r \|  \Delta_{R'} \|_2 \| \Delta \|_2 + \frac{1}{4}{\bar{\gamma}_r}^2 \| \Delta \|_2^2  \nonumber \\
(\| \Delta'_{R'} \|_2 - \frac{1}{2} \bar{\gamma}_r \| \Delta \|_2 )^2 &\leq ( \bar{\alpha}_r \| \Delta_{R'} \|_2 + \frac{1}{2} \bar{\gamma}_r \| \Delta\|_2 )^2
\end{align}
Hence, we have $\| \Delta'_{R'}\|_2 \leq \bar{\alpha}_r \| \Delta_{R'} \|_2 + \bar{\gamma}_r \| \Delta \|_2$. We directly get~(\ref{equation_11_0}) from \cite{bahmani2013greedy}.
\end{proof}

\begin{theorem}
Suppose that $f$ satisfies $\mu_{8k}$-SRL with $\mu_{8k} \leq 1+\sqrt{\frac{1}{56}}$. Furthermore, suppose for $\beta_{8k}$ in Definition~\ref{definition_restricted_linearization} exists some $\epsilon > 0$ such that $\beta_{8k} \geq \epsilon$ holds for all $8k$-sparse vectors ${\bf x}$. Then ${\bf x}^{i+1}$, the estimate at the $i+1$-th iteration, satisfies. for any true ${\bf x} \in \mathbb{R}^n$ with $\text{supp}({\bf x}) \in \mathbb{M}(k,g)$, the iterates of Algorithm~\ref{alg:Graph-MP} must obey
\begin{equation}
\| {\bf r}^{i+1}\| \leq \sigma \|{\bf r} ^i\| + \nu  \| \nabla_I f({\bf x}) \|_2,
\end{equation}
where $\sigma = \sqrt{ \mu_{8k}^2 - \Big( 2+ c_H - 2 \mu_{8k}\Big)^2}$ and $\nu = \frac{ (2+c_H - 2 \mu_{8k})(1+c_H)+ \sigma}{2 \epsilon \sigma }$.
\label{theorem_4.2_SRL}
\end{theorem}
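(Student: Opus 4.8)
The plan is to follow the template used to analyze model-based \textsc{Cosamp} with inexact projections (the argument behind \textsc{Graph}-\textsc{Cosamp} in the linear setting), but to replace every appeal to the RIP of a measurement matrix by the SRL-based transfer estimates packaged in Lemma~\ref{lemma_4.2}. Write ${\bf r}^i := {\bf x}^i - {\bf x}$ for the error, $I := \text{supp}({\bf x}) \in \mathbb{M}(k,g)$, and ${\bf g} := \nabla f({\bf x}^i)$. First I would do the cardinality bookkeeping that fixes the SRL order: for $i\ge 1$ the set $\text{supp}({\bf x}^i)$ is the output of a tail approximation, so $|\text{supp}({\bf x}^i)|\le k_T = 5k$; hence $|\Omega| = |\Gamma\cup\text{supp}({\bf x}^i)| \le k_H + 5k = 7k$, $|\text{supp}({\bf b})|\le 7k$, and $|\text{supp}({\bf b})\cup I|\le 8k$ (the case $i=0$ is only easier since ${\bf x}^0 = 0$). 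This is exactly why the hypotheses are phrased with $\mu_{8k}$ and $\beta_{8k}$: every invocation of Lemma~\ref{lemma_4.2} below is with $r = 8k$, so I abbreviate $\bar\alpha := \bar\alpha_{8k}$, $\bar\beta := \bar\beta_{8k}$, $\bar\gamma := \bar\gamma_{8k} = \bar\alpha - \bar\beta$, and record the consequences of Definition~\ref{definition_restricted_linearization} and the hypotheses that $\bar\beta \ge 2\epsilon$ and $\bar\gamma/\bar\beta \le \mu_{8k}-1$.

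\textbf{Tail step and reduction to $\|{\bf b}-{\bf x}\|$.} Since $I\in\mathbb{M}(k,g)$, the tail guarantee applied to $B = \text{T}({\bf b})$ gives $\|{\bf b}-{\bf x}^{i+1}\| = \|{\bf b}_{B^c}\| \le c_T\|{\bf b}_{I^c}\| = c_T\|({\bf b}-{\bf x})_{I^c}\| \le c_T\|{\bf b}-{\bf x}\|$, so the triangle inequality yields $\|{\bf r}^{i+1}\|\le (1+c_T)\|{\bf b}-{\bf x}\|$ and it remains to bound $\|{\bf b}-{\bf x}\|$. Because $\text{supp}({\bf x}^i)\subseteq\Omega$ and ${\bf b}$ is supported on $\Omega$, setting $\omega := I\setminus\Omega$ gives the Pythagorean identity $\|{\bf b}-{\bf x}\|^2 = \|({\bf b}-{\bf x})_\Omega\|^2 + \|{\bf x}_\omega\|^2 = \|({\bf b}-{\bf x})_\Omega\|^2 + \|{\bf r}^i_\omega\|^2$, using ${\bf x}^i_\omega = 0$. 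For the first summand I would use the restricted-subgradient optimality of ${\bf b}$ over the coordinate subspace indexed by $\Omega$ (admissible since $|\Omega|\le 8k$), which gives $(\nabla f({\bf b}))_\Omega = {\bf 0}$; applying the lower inequality~(\ref{equation_11_0}) of Lemma~\ref{lemma_4.2} to $({\bf x}_1,{\bf x}_2)=({\bf b},{\bf x})$ with the restriction set $R' = \text{supp}({\bf b}-{\bf x})\cap\Omega$ — where $\Delta_{R'} = ({\bf b}-{\bf x})_\Omega$, $\|\Delta_{R\setminus R'}\| = \|{\bf x}_{\text{supp}({\bf b}-{\bf x})\setminus\Omega}\| = \|{\bf r}^i_\omega\|$, and $\Delta'_{R'} = -(\nabla f({\bf x}))_{R'}$ by optimality — and then splitting $R'$ by membership in $I$ to identify its gradient contribution as $\|\nabla_I f({\bf x})\|$, I obtain $\|({\bf b}-{\bf x})_\Omega\| \le (\mu_{8k}-1)\|{\bf r}^i_\omega\| + \tfrac{1}{2\epsilon}\|\nabla_I f({\bf x})\|$. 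Substituting into the Pythagorean identity and using $(a+b)^2 + c^2\le (a+b+c)^2$ for $a,b,c\ge 0$ collapses everything to the clean intermediate bound $\|{\bf b}-{\bf x}\| \le \mu_{8k}\,\|{\bf r}^i_\omega\| + \tfrac{1}{2\epsilon}\,\|\nabla_I f({\bf x})\|$.

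\textbf{Head-capture step --- the crux.} The remaining task, and the part I expect to be hardest, is to show $\|{\bf r}^i_\omega\| \le \theta\,\|{\bf r}^i\| + \tau\,\|\nabla_I f({\bf x})\|$ with a coefficient $\theta$ that, once the tail factor $(1+c_T)$ and the $\mu_{8k}$ above are folded in, yields precisely $\sigma = \sqrt{\mu_{8k}^2-(2+c_H-2\mu_{8k})^2}$ and $\nu$ as stated; this is where the \textsc{Cosamp}-style progress is actually made. The mechanism: the head guarantee says $\|{\bf g}_\Gamma\| \ge c_H\|{\bf g}_S\|$ for every $S\in\mathbb{M}(k,g)$, while Lemma~\ref{lemma_4.2} applied to $({\bf x}_1,{\bf x}_2)=({\bf x}^i,{\bf x})$ transfers statements about $\|(\nabla f({\bf x}^i))_\bullet\|$ into statements about $\|{\bf r}^i_\bullet\|$ at the cost of an additive $\bar\gamma\|{\bf r}^i\|$ cross-term and the noise $\|\nabla_I f({\bf x})\|$; combining these forces the residual mass surviving on $\omega = I\setminus(\Gamma\cup\text{supp}({\bf x}^i))$ down to a fraction of $\|{\bf r}^i\|$ governed by $c_H$. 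Two points make this delicate. First, $\omega\subseteq I$ but $\omega$ need not itself lie in $\mathbb{M}(k,g)$ — deleting interior nodes of a connected component can shatter it into many pieces — so the head comparison cannot be applied to $\omega$ directly and must be routed through $I$ (and/or through a model projection of ${\bf r}^i$), which is the source of the extra additive constants in the final expression. Second, Lemma~\ref{lemma_4.2} is invoked twice, here and in the inner-minimization step, and each invocation injects a $\bar\gamma = (\mu_{8k}-1)\bar\beta$ cross-term, so the accounting must be tight enough that all of these coalesce, together with $(1+c_T)$ and $\mu_{8k}$, into the exact coefficients $\sigma,\nu$ rather than merely a bound of the same shape.

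\textbf{Assembling.} Finally I would substitute the head-capture bound for $\|{\bf r}^i_\omega\|$ into $\|{\bf b}-{\bf x}\| \le \mu_{8k}\|{\bf r}^i_\omega\| + \tfrac{1}{2\epsilon}\|\nabla_I f({\bf x})\|$, multiply through by $(1+c_T)$ from the tail step, and collect the coefficients of $\|{\bf r}^i\|$ and of $\|\nabla_I f({\bf x})\|$, simplifying with $\bar\beta\ge 2\epsilon$ and $\bar\gamma/\bar\beta = \mu_{8k}-1$ and the stated values $c_H = \sqrt{1/14}$, $c_T = \sqrt{7}$; this elementary but bookkeeping-heavy simplification produces $\|{\bf r}^{i+1}\|\le \sigma\|{\bf r}^i\| + \nu\|\nabla_I f({\bf x})\|$ with $\sigma$ and $\nu$ exactly as claimed. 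The hypothesis $\mu_{8k}\le 1+\sqrt{1/56} = 1 + c_H/2$ enters precisely here: it is equivalent to $2+c_H-2\mu_{8k}\ge 0$ and, together with $\mu_{8k}\ge 1$, guarantees $\mu_{8k}^2\ge(2+c_H-2\mu_{8k})^2$, so that $\sigma$ is a well-defined nonnegative real.
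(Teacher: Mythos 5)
Your overall architecture matches the paper's proof: a tail step giving $\|{\bf r}^{i+1}\|\le(1+c_T)\|{\bf x}-{\bf b}\|$, then the optimality of ${\bf b}$ on $\Omega$ combined with the lower inequality of Lemma~\ref{lemma_4.2} to reduce $\|{\bf x}-{\bf b}\|$ to a multiple of $\|{\bf r}^i_{\Omega^c}\|$ plus gradient noise, then a head-capture bound on the residual mass missed by $\Gamma$. (The paper works with $\|{\bf r}^i_{\Gamma^c}\|\ge\|{\bf r}^i_{\Omega^c}\|$; your Pythagorean split is a cosmetic refinement of its triangle inequality.) However, two things keep this from being a proof. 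First, the step you yourself flag as the crux --- establishing $\|{\bf r}^i_{\Gamma^c}\|\le\theta\|{\bf r}^i\|+\tau\|\nabla_I f({\bf x})\|$ --- is only described, never derived. The paper executes it by sandwiching $\|\nabla_\Gamma f({\bf x}^i)\|$: a lower bound $c_H\bar\beta\|{\bf r}^i\|-c_H\|\nabla_I f({\bf x})\|$ from the head guarantee applied over the enlarged model $\mathbb{M}^+=\{H\cup T\}$ containing $R=\text{supp}({\bf x}^i-{\bf x})$ (which is exactly the fix for your worry that $\omega\notin\mathbb{M}(k,g)$), and an upper bound $\bar\alpha\|{\bf r}^i_\Gamma\|+2\bar\gamma\|{\bf r}^i\|+\|\nabla_I f({\bf x})\|$ from the upper inequality of Lemma~\ref{lemma_4.2}; dividing through and using the SRL ratios $\bar\alpha/\bar\beta\le\mu_{8k}$, $\bar\gamma/\bar\beta\le\mu_{8k}-1$ gives $\|{\bf r}^i_\Gamma\|\ge\bigl(\tfrac{2+c_H}{\mu_{8k}}-2\bigr)\|{\bf r}^i\|-\tfrac{1+c_H}{2\epsilon\mu_{8k}}\|\nabla_I f({\bf x})\|$, after which the Lemma-9-style algebra of \cite{hegde2015approximation} converts this into the $\Gamma^c$ bound with coefficient $\eta=\sqrt{1-(\tfrac{2+c_H}{\mu_{8k}}-2)^2}$. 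None of this quantitative content appears in your write-up, so the heart of the theorem is missing.

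Second, your assembly plan cannot reproduce the stated constants. You propose to ``multiply through by $(1+c_T)$ from the tail step'' and assert that the coefficients coalesce into exactly $\sigma$ and $\nu$ --- but $\sigma=\sqrt{\mu_{8k}^2-(2+c_H-2\mu_{8k})^2}$ contains no $c_T$ whatsoever, so no amount of bookkeeping will absorb a factor of $1+c_T=1+\sqrt7$ into it. (The paper's own proof exhibits the same tension: its final display bounds $\|{\bf x}-{\bf b}\|$, not $\|{\bf r}^{i+1}\|$, by $\sigma\|{\bf r}^i\|+\nu\|\nabla_I f({\bf x})\|$ and then declares the theorem proved without ever multiplying back by $(1+c_T)$.) If you carry out your plan honestly you will land on $(1+c_T)\sigma$ rather than $\sigma$; claiming the constants ``coalesce exactly'' without doing the computation papers over a real discrepancy that you would need to either resolve or flag.
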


\begin{proof}
Let ${\bf r}^{i+1} = {\bf x}^{i+1} - {\bf x}$. $\| {\bf r}^{i+1} \|_2$ is upper bounded as 
\begin{align}
    \| {\bf r}^{i+1} \|_2 = \| {\bf x}^{i+1} - {\bf x} \|_2 &\leq \| {\bf x}^{i+1} - {\bf b} \|_2 + \| {\bf x} - {\bf b} \|_2 \nonumber \\
    &\leq c_T \| {\bf x} - {\bf b} \|_2 + \| {\bf x} - {\bf b}\|_2 \nonumber \\
    &= (1+c_T)\| {\bf x} - {\bf b} \|_2. \nonumber
\end{align}
The first inequality above follows by the triangle inequality and the second inequality follows by tail approximation. Since $\Omega = \Gamma \cup \text{supp}({\bf x}^i)$ and 
${\bf b} = \arg\min_{{\bf x} \in \mathbb{R}^n} f({\bf x})\ \ s.t. \ \ {\bf x}_{\Omega^c} = {\bf 0}$, we have 
\begin{align}
    \| {\bf x - b} \|_2 &\leq \| ({\bf x -b})_{\Omega^c} \|_2 + \|({\bf x -b})_{\Omega} \|_2 \nonumber \\
    &= \| {\bf x}_{\Omega^c} \|_2 + \|({\bf x -b})_{\Omega} \|_2 \nonumber \\
    &= \| ({\bf x - x}^{i})_{\Omega^c} \|_2 + \|({\bf x -b})_{\Omega} \|_2 \nonumber \\
    &= \| {{\bf r}^{i}_{\Omega^c} } \|_2 + \|({\bf x -b})_{\Omega} \|_2 \nonumber
\end{align}
Since ${\bf b}$ satisfies ${\bf b} = \arg\min_{{\bf x} \in \mathbb{R}^n} f({\bf x})\ \ s.t. \ \ {\bf x}_{\Omega^c} = 0$, we must have ${\nabla f({\bf b})}|_{\Omega} = {\bf 0}$. Then it follows from Corollary 2 in~\cite{bahmani2013greedy},
\begin{align}
    \| (\nabla f({\bf x}) - \nabla f({\bf b}))_{\Omega} \|_2 &\geq \bar{\beta}_{6k} \| ({\bf x -b} )_{\Omega} \|_2 - \bar{\gamma}_{6k} \| ({\bf x - b})_{\Omega^c} \|_2 \nonumber \\
    \| \nabla_{\Omega} f({\bf x}) \|_2 &\geq \bar{\beta}_{6k} \| ({\bf x -b} )_{\Omega} \|_2 - \bar{\gamma}_{6k} \| ({\bf x - b})_{\Omega^c} \|_2 \nonumber \\
    \| \nabla_{\Omega} f({\bf x}) \|_2 &\geq \bar{\beta}_{6k} \| ({\bf x -b} )_{\Omega} \|_2 - \bar{\gamma}_{6k} \| ({\bf x} - {\bf x}^i)_{\Omega^c} \|_2, \nonumber
\end{align}
where $\bar{\alpha}_{6k}({\bf x_1,x_2}) = \alpha_{6k} ({\bf x_1}) + \alpha_{6k} ({\bf x_2})$, $\bar{\beta}_{6k}({\bf x_1,x_2}) = \beta_{6k} ({\bf x_1}) + \beta_{6k} ({\bf x_2})$ and $\bar{\gamma}_{6k}({\bf x_1,x_2}) = \bar{\alpha}_{6k}({\bf x_1,x_2}) - \bar{\beta}_{6k}({\bf x_1,x_2})$. As $|supp({\bf x -b}) | \leq 6k$, we have $6k$-sparsity by Definition~(\ref{definition_restricted_linearization}). Note that $\Omega \cap R$ is a subset of $R$ and $\| (\nabla f({\bf x}) - \nabla f({\bf b}))_{\Omega} \|_2 \geq \| (\nabla f({\bf x}) - \nabla f({\bf b}))_{\Omega \cap R} \|_2$. Similarly, we have $({\bf x - b})_{\Omega} = ({\bf x - b})_{\Omega \cap R} $ and $({\bf x - b})_{\Omega^c} = ({\bf x - b})_{R \backslash (\Omega \cap R)} $. The second inequality follows by ${\nabla_{\Omega} f({\bf b})} = {\bf 0}$, and the third inequality follows by ${\bf b}_{\Omega^c} = {\bf 0}$ and ${\bf x}^{i}_{\Omega^c} = {\bf 0}$. Therefore, $\| {\bf x -b}\|_2$ can be further upper bounded as
\begin{align}
    \|{\bf x - b}\|_2 &\leq  \| {{\bf r}^{i}_{\Omega^c} } \|_2 + \|({\bf x -b})_{\Omega} \|_2 \nonumber \\
    &\leq \| {{\bf r}^{i}_{\Omega^c} } \|_2 + \frac{\bar{\gamma}_{6k} \| ({\bf x} - {\bf x}^i)_{\Omega^c} \|_2 }{\bar{\beta}_{6k}} + \frac{\| \nabla f({\bf x})_{\Omega}\|_2}{\bar{\beta}_{6k}} \nonumber \\
    &= \Big[  1 +\frac{\bar{\gamma}_{6k}}{\bar{\beta}_{6k}} \Big]\| {{\bf r}^{i}_{\Omega^c} } \|_2 + \frac{\| \nabla f({\bf x})_{\Omega}\|_2}{\bar{\beta}_{6k}}
\end{align}
Let $R = \text{supp}({\bf x}^i - {\bf x})$ and $\Gamma = \textbf{H}(\nabla f({\bf x}^i)) \in \mathbb{M}^{+} = \{H \cup T|H\in \mathbb{M}(k_H,g),T \in \mathbb{M}(k_T,g)\}$. We notice that $R\in \mathbb{M}^{+}$. The component $\| \nabla_{\Gamma} f({\bf x}^i)\|_2$ can be lower bounded as
\begin{align}
    \| \nabla_{\Gamma} f({\bf x}^i)\|_2 &\geq  c_H \| \nabla_{R} f({\bf x}^i) \|_2 \nonumber \\
    &\geq c_H\| \nabla_{R} f({\bf x}^i) -\nabla_{R} f({\bf x}) \|_2 - c_H \| \nabla_{R} f({\bf x}) \|_2 \nonumber \\
    &\geq c_H \bar{\beta}_{6k} \| {\bf x}^i - {\bf x} \|_2 - c_H \| \nabla_{I} f({\bf x}) \|_2 \nonumber \\
    &= c_H \bar{\beta}_{6k} \| {\bf r}^i \|_2 - c_H \| \nabla_{I} f({\bf x}) \|_2 
    \label{equ_15}
\end{align}
The first inequality follows the head approximation and $R \in \mathbb{M}^{+}$. The second one is from triangle inequality and the third one follows by Lemma~(\ref{lemma_4.2}). The component $\| \nabla_{\Gamma} f ({\bf x}^i) \|_2$ can also be upper bounded as
\begin{align}
    \| \nabla_{\Gamma} f ({\bf x}^i) \|_2 &\leq \| \nabla_{\Gamma} f ({\bf x}^i) - \nabla_{\Gamma} f({\bf x}) \|_2 + \| \nabla_{\Gamma} f({\bf x}) \|_2 \nonumber \\
    &\leq \| \nabla_{\Gamma \backslash R^c} f ({\bf x}^i) - \nabla_{\Gamma \backslash R^c} f({\bf x}) +  \nabla_{\Gamma \cap R^c} f ({\bf x}^i) - \nabla_{\Gamma \cap R^c} f({\bf x})  \|_2 + \| \nabla_{\Gamma} f({\bf x}) \|_2 \nonumber \\
    &\leq \| \nabla_{\Gamma \backslash R^c} f ({\bf x}^i) - \nabla_{\Gamma \backslash R^c} f({\bf x}) \|_2 + \| \nabla_{\Gamma \cap R^c} f ({\bf x}^i) - \nabla_{\Gamma \cap R^c} f({\bf x})  \|_2 + \| \nabla_{\Gamma} f({\bf x}) \|_2 \nonumber \\
    &\leq \| \nabla_{\Gamma \backslash R^c} f ({\bf x}^i) - \nabla_{\Gamma \backslash R^c} f({\bf x}) \|_2 + \bar{\gamma}_{8k} \| {\bf r}^i \|_2 + \| \nabla_{\Gamma} f({\bf x}) \|_2 \nonumber \\
    &\leq \bar{\alpha}_{6k} \| {\bf r}_{\Gamma \backslash R^c}^{i}\|_2 + \bar{\gamma}_{6k} \| {\bf r}^i \|_2 + \bar{\gamma}_{8k} \| {\bf r}^i \|_2 + \| \nabla_{I} f({\bf x}) \|_2 
    \label{equ_16}
\end{align}
The first and third inequalities follow by the triangle inequality. The second inequality follows by $\Gamma = (\Gamma \cap R^c) \cup (\Gamma \backslash R^c)$. And the last inequality follows by $\|( f({\bf x}^i) - f({\bf x}))_{R'}\|_2 \leq \bar{\gamma}_{k+r} \| {\bf x}^i - {\bf x}\|_2$, where $k \leq |R'|, r = |supp({\bf x}^i - {\bf x}) |$ and $R' \subseteq R^c$.
By Lemma~(\ref{lemma_4.2}), we have $\| \nabla_{\Gamma \backslash R^c} f ({\bf x}^i) - \nabla_{\Gamma \backslash R^c} f({\bf x}) \|_2 \leq \bar{\alpha}_{6k} \| {\bf r}_{\Gamma \backslash R^c}^{i}\|_2 + \bar{\gamma}_{6k} \| {\bf r}^i \|_2$. Combining Equation~(\ref{equ_15}) and Equation~(\ref{equ_16}), we have
\begin{align}
    c_H\bar{\beta}_{6k} \| {\bf r}^{i}\|_2 - c_H \| \nabla_{I} f({\bf x})  \|_2  &\leq  \bar{\alpha}_{6k} \| {\bf r}_{\Gamma \backslash R^c}^{i}\|_2 + \bar{\gamma}_{6k} \| {\bf r}^i \|_2 + \bar{\gamma}_{8k} \| {\bf r}^i \|_2 + \| \nabla_{I} f({\bf x}) \|_2 \nonumber \\
    c_H \bar{\beta}_{3k} \| {\bf r}^{i}\|_2  - c_H \| \nabla_{I} f({\bf x})  \|_2  &\leq  \bar{\alpha}_{6k} \| {\bf r}_{\Gamma}^i\|_2 + \bar{\gamma}_{6k} \| {\bf r}^i \|_2 + \bar{\gamma}_{8k} \| {\bf r}^i \|_2 + \| \nabla_{I} f({\bf x}) \|_2 \nonumber \\
    (c_H \bar{\beta}_{3k} - \bar{\gamma}_{6k} - \bar{\gamma}_{8k}) \| {\bf r}^{i} \|_2 - (1+c_H) \| \nabla_{I} f({\bf x})  \|_2  &\leq  \bar{\alpha}_{6k} \| {\bf r}_{\Gamma}^{i}\|_2 \nonumber \\
     \mu_{8k} \| {\bf r}_{\Gamma}^{i}\|_2  &\geq (c_H+2 - 2\mu_{8k}) \| {\bf r}^{i} \|_2 - \frac{1+c_H}{2\epsilon} \| \nabla_{I} f({\bf x})  \|_2  \nonumber
\end{align}
Finally, we get $\| {\bf r}_{\Gamma}^i \| \geq \Big(\frac{2+c_H}{\mu_{8k}} - 2 \Big) \|{\bf r}^i \| - \frac{1+c_H}{2 \epsilon \mu_{8k}} \| \nabla_I f({\bf x})\|$. Let us assume the SRL parameter $\mu_{8k} \leq \frac{2 + c_H}{2}$. Using the same computing procedure of Lemma 9 in~\cite{hegde2015approximation}, we have
\begin{equation}
\| {\bf r}_{\Gamma^c}^i \|_2 \leq \eta \| {\bf r}^i\| +  \frac{(2+c_H - 2 \mu_{8k} )(1+c_H)}{2 \epsilon \mu_{8k}^2 \eta } \| \nabla_{I} f({\bf x})\|_2,
\end{equation}
where $\eta =  \sqrt{ 1 - (\frac{2+c_H}{\mu_{8k}} - 2)^2}$. Combine them together, we have
\begin{align}
\|{\bf x - b}\|_2 &\leq \Big(  1 +\frac{\bar{\gamma}_{6k}}{\bar{\beta}_{6k}} \Big)\| {{\bf r}^{i}_{\Omega^c} } \|_2 + \frac{\| \nabla f({\bf x})_{\Omega}\|_2}{\bar{\beta}_{6k}} \nonumber \\
 &\leq \mu_{8k} \| {{\bf r}^{i}_{\Omega^c} } \|_2 + \frac{\| \nabla f({\bf x})_{\Omega}\|_2}{\bar{\beta}_{6k}} \nonumber \\
  &\leq \mu_{8k} \| {{\bf r}^{i}_{\Gamma^c} } \|_2 + \frac{\| \nabla f({\bf x})_{\Omega}\|_2}{2\epsilon} \nonumber \\
&\leq \sigma \|{\bf r} ^i\| + \nu \| \nabla_I f({\bf x}) \|_2 ,
\end{align}
where $\sigma = \sqrt{ \mu_{8k}^2 - \Big( 2+ c_H - 2 \mu_{8k}\Big)^2}$ and $\nu = \frac{ (2+c_H - 2 \mu_{8k})(1+c_H)+ \sigma}{2 \epsilon \sigma }$. Hence, we prove this theorem.
\end{proof}

\begin{theorem}
Let the true parameter be ${\bf x} \in \mathbb{R}^n$ such that $\text{supp}({\bf x}) \in \mathbb{M}(k, g)$, and $f: \mathbb{R}^n \rightarrow \mathbb{R}$ be cost function that satisfies SRL condition. The \textsc{Graph-MP} algorithm returns a $\hat{\bf x}$ such that,  $\text{supp}(\hat{\bf x}) \in \mathbb{M}(5k, g)$ and $\|{\bf x} - \hat{\bf x}\|_2 \le c \|\nabla_I f({\bf x})\|_2$, where $c= (1+\frac{\nu}{1-\sigma})$ and $I = \arg \max_{S \in \mathbb{M}(8k, g)} \|\nabla_S f({\bf x})\|_2$. The parameters $\sigma$ and $\nu$ are fixed constants defined in Theorem~\ref{theorem_4.2_SRL}. Moreover, \textsc{Graph-MP} runs in time 
\begin{eqnarray}
O\left((T+|\mathbb{E}|\log^3 n) \log (\|{\bf x}\|_2/ \|\nabla_I f({\bf x})\|_2)\right) \label{timecomplexity-1},
\end{eqnarray}
where $T$ is the time complexity of one execution of the subproblem in Step 6 in \textsc{Graph-MP}. In particular, if $T$ scales linearly with $n$, then \textsc{Graph-MP} scales nearly linearly with $n$. 
\end{theorem}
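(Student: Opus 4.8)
The plan is to obtain this statement as a direct corollary of the per-iteration contraction established in Theorem~\ref{theorem_4.2_SRL}: I would unroll that recurrence into a geometric series to get an absolute error bound at iteration $i$, pick $i$ large enough that the initial-error term is negligible, and then read off the support-size and running-time claims directly from the structure of Algorithm~\ref{alg:Graph-MP}.

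First I would iterate the bound $\|{\bf r}^{i+1}\|_2 \le \sigma\|{\bf r}^i\|_2 + \nu\|\nabla_I f({\bf x})\|_2$. Since Algorithm~\ref{alg:Graph-MP} starts from ${\bf x}^0 = {\bf 0}$, we have $\|{\bf r}^0\|_2 = \|{\bf x}\|_2$, and a straightforward induction on $i$ gives
\[
\|{\bf r}^i\|_2 \;\le\; \sigma^i \|{\bf x}\|_2 \;+\; \nu\Big(\sum_{j=0}^{i-1}\sigma^j\Big)\|\nabla_I f({\bf x})\|_2 \;\le\; \sigma^i\|{\bf x}\|_2 + \frac{\nu}{1-\sigma}\|\nabla_I f({\bf x})\|_2,
\]
where the last inequality sums the geometric series and thus uses $\sigma < 1$. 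Choosing $i^\star = \big\lceil \log_{1/\sigma}\!\big(\|{\bf x}\|_2/\|\nabla_I f({\bf x})\|_2\big)\big\rceil$ makes $\sigma^{i^\star}\|{\bf x}\|_2 \le \|\nabla_I f({\bf x})\|_2$, so $\|{\bf r}^{i^\star}\|_2 \le \big(1+\tfrac{\nu}{1-\sigma}\big)\|\nabla_I f({\bf x})\|_2 = c\,\|\nabla_I f({\bf x})\|_2$; since the right-hand side above is nonincreasing in $i$, the same bound holds for every $i \ge i^\star$, hence for the returned iterate $\hat{\bf x}$ once the halting test fires. Because $\sigma$ is a fixed constant, $i^\star = O\big(\log(\|{\bf x}\|_2/\|\nabla_I f({\bf x})\|_2)\big)$, which is the iteration count that feeds into the time bound.

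Next I would check the two structural claims. For the support, the last line of every iteration sets ${\bf x}^{i+1} = {\bf b}_B$ with $B = \text{T}({\bf b})$, and by the definition of the tail approximation $B \in \mathbb{M}(k_T, g) = \mathbb{M}(5k, g)$; therefore $\text{supp}(\hat{\bf x}) \subseteq B \in \mathbb{M}(5k, g)$. For the running time I would account for a single iteration: evaluating ${\bf g} = \nabla f({\bf x}^i)$ and forming $\Omega = \Gamma \cup \text{supp}({\bf x}^i)$ costs $O(n)$; the head oracle $\text{H}(\cdot)$ and the tail oracle $\text{T}(\cdot)$ each run in $O(|\mathbb{E}|\log^3 n)$ by the guarantees of~\cite{hegde2015nearly}; and the subproblem in Step~6 costs $T$ by hypothesis. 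So one iteration takes $O(T + |\mathbb{E}|\log^3 n)$, and multiplying by the $O(\log(\|{\bf x}\|_2/\|\nabla_I f({\bf x})\|_2))$ iterations gives exactly the bound~(\ref{timecomplexity-1}). Finally, when $T = O(n)$, since $|\mathbb{E}| \ge n-1$ for a connected $\mathbb{G}$ the $|\mathbb{E}|\log^3 n$ term dominates $T$, so the total is $O(|\mathbb{E}|\log^3 n \cdot \log(\cdots))$, i.e., nearly linear in $n$.

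The one point that needs care --- the main ``obstacle'', such as it is --- is the geometric-series step: it is valid only if $\sigma < 1$, so I would make explicit that the SRL hypothesis must be strong enough (slightly tighter than merely $\mu_{8k}\le 1+\sqrt{1/56}$) to force $\sigma = \sqrt{\mu_{8k}^2 - (2+c_H-2\mu_{8k})^2} < 1$; once that is in place the rest is mechanical. A secondary caveat worth recording is that the statement is vacuous and $i^\star$ infinite when $\nabla_I f({\bf x}) = {\bf 0}$, so it should be read for $\|\nabla_I f({\bf x})\|_2 > 0$, and one should note that the practical halting conditions following Algorithm~\ref{alg:Graph-MP} are triggered no later than iteration $i^\star$.
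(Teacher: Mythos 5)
Your proposal follows essentially the same route as the paper's own proof: unroll the per-iteration contraction from Theorem~\ref{theorem_4.2_SRL} into a geometric series, choose $i^\star = \lceil \log(\|{\bf x}\|_2/\|\nabla_I f({\bf x})\|_2)/\log(1/\sigma)\rceil$, and multiply the per-iteration cost $O(T+|\mathbb{E}|\log^3 n)$ by that iteration count. Your version is in fact more careful than the paper's: you explicitly justify the support claim via $k_T = 5k$, and you correctly flag that $\sigma < 1$ is not actually guaranteed by the stated hypothesis $\mu_{8k} \le 1+\sqrt{1/56}$ (at that boundary $\sigma = \mu_{8k} > 1$), a condition the paper silently assumes.
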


\begin{proof}
The i-th iterate of \textsc{Graph-MP} satisfies
\vspace{-1mm}
\begin{small}
\begin{eqnarray}
\|{\bf x} - {\bf x}^i\|_2 \le \sigma^i \|{\bf x}\|_2 + \frac{\nu}{1-\sigma} \|\nabla_I f({\bf x})\|_2.
\end{eqnarray}
\end{small}

\vspace{-3mm}
\noindent After $t = \left \lceil \log \left(\frac{\|{\bf x}\|_2}{\|\nabla_I f({\bf x})\|_2}\right) / \log \frac{1}{\sigma} \right \rceil $ iterations,
\textsc{Graph-MP} returns an estimate $\hat{x}$ satisfying $\|{\bf x} - \hat{{\bf x}}\|_2 \le (1 + \frac{\nu}{1 - \sigma}) \|\nabla_I f({\bf x})\|_2$ as $\sigma < 1$ and the summation of $\sum_{k = 0}^{i} \nu \sigma^k = \frac{\nu(1-\sigma^i)}{1-\sigma} \leq \frac{\nu}{1-\sigma}$. The time complexities of both head approximation and tail approximation are $O(|\mathbb{E}| \log^3 n)$. The time complexity of one iteration in \textsc{Graph-MP} is $(T+|\mathbb{E}|\log^3 n)$, and the total number of iterations is $\left \lceil \log \left(\frac{\|{\bf x}\|_2}{\|\nabla_I f({\bf x})\|_2}\right) / \log \frac{1}{\alpha} \right \rceil $, and hence the overall time follows.
\end{proof}

\section{Theoretical Analysis of \textsc{Graph}-\textsc{Mp} under RSC/RSS condition}
\label{theoremtical_1}

\begin{definition}[Restricted Strong Convexity/Smoothness, ($m_k,M_k,\mathbb{M}$)-RSC/RSS]\cite{yuan2013gradient}. For any integer $k>0$, we say $f({\bf x})$ is restricted $m_k$-strongly convex and $M_k$-strongly smooth of there exist $\exists m_k$, $M_k >0$ such that 
\begin{equation}
\frac{m_k}{2} \| {\bf x} -{\bf y} \|_2^2 \leq f({\bf x}) - f({\bf y}) - \langle \nabla f({\bf y}),{\bf x}-{\bf y} \rangle \leq \frac{M_k}{2} \| {\bf x} -{\bf y} \|_2^2, \forall \| {\bf x} - {\bf y} \|_0 \leq k
\end{equation}
\label{definition_RSC}
\end{definition}

\begin{lemma} Let $S$ be any index set with cardinality $|S| \leq k$ and $S \in \mathbb{M}(k,g)$. If $f$ is \textbf{($m_k,M_k,\mathbb{M}$)-RSC/RSS}, then $f$ satisfies the following property
\begin{equation}
\|{\bf x} - {\bf y} - \frac{m_k}{M_k^2} \Big( \nabla_S f({\bf x}) - \nabla_S f({\bf y}) \Big) \|_2 \le \sqrt{1- (\frac{m_k}{M_k})^2} \|{\bf x} - {\bf y}\|_2
\end{equation}
\label{lemma_3}
\end{lemma}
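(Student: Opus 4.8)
The statement is the standard ``a single gradient step contracts the error'' bound, specialized to a restricted support; I read it as implicitly assuming $\text{supp}({\bf x}-{\bf y})\subseteq S$ (without this it already fails for $f=\tfrac12\|\cdot\|_2^2$ with $m_k=M_k=1$), which is exactly the way it will be invoked in the RSC/RSS analysis. Write $\Delta={\bf x}-{\bf y}$, $\eta=m_k/M_k^2$, and $g=\nabla_S f({\bf x})-\nabla_S f({\bf y})$. Since $\Delta$ and $g$ are both supported on $S$ with $|S|\le k$, the vector $\Delta-\eta g$ is supported on $S$ too and $\langle\Delta,g\rangle=\langle\Delta,\nabla f({\bf x})-\nabla f({\bf y})\rangle$. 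The plan is to expand $\|\Delta-\eta g\|_2^2=\|\Delta\|_2^2-2\eta\langle\Delta,g\rangle+\eta^2\|g\|_2^2$ and bound the two cross terms separately.

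For the inner product, I would apply the lower (RSC) inequality of Definition~\ref{definition_RSC} twice: once at ${\bf y}$ with increment $\Delta$ and once at ${\bf x}$ with increment $-\Delta$ (both increments are $k$-sparse, being supported on $S$). Adding the two inequalities cancels all function values and yields $\langle\nabla f({\bf x})-\nabla f({\bf y}),\Delta\rangle\ge m_k\|\Delta\|_2^2$, hence $\langle\Delta,g\rangle\ge m_k\|\Delta\|_2^2$.

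The delicate step is the upper bound $\|g\|_2\le M_k\|\Delta\|_2$, because RSS only controls Bregman divergences along $k$-sparse directions, not the Lipschitz constant of the unrestricted gradient. I would pass to $\psi({\bf z}):=f({\bf z})-\langle\nabla f({\bf y}),{\bf z}\rangle$, which has the same RSC/RSS constants, satisfies $\nabla\psi({\bf y})={\bf 0}$, and obeys $(\nabla\psi({\bf x}))_S=g$. Taking ${\bf v}=g/\|g\|_2$ (the case $g={\bf 0}$ is trivial), I would chain three estimates, each legitimate because every increment involved is supported on $S$ and hence $k$-sparse: RSS between ${\bf x}-s{\bf v}$ and ${\bf x}$ gives $\psi({\bf x}-s{\bf v})\le\psi({\bf x})-s\|g\|_2+\tfrac{M_k}{2}s^2$; RSC between ${\bf x}-s{\bf v}$ and ${\bf y}$ (using $\nabla\psi({\bf y})={\bf 0}$) gives $\psi({\bf x}-s{\bf v})\ge\psi({\bf y})$; and RSS between ${\bf x}$ and ${\bf y}$ gives $\psi({\bf x})\le\psi({\bf y})+\tfrac{M_k}{2}\|\Delta\|_2^2$. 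Combining the three yields $s\|g\|_2\le\tfrac{M_k}{2}(\|\Delta\|_2^2+s^2)$ for every $s>0$, and the choice $s=\|\Delta\|_2$ gives $\|g\|_2\le M_k\|\Delta\|_2$.

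Finally I would substitute both bounds into the expansion to obtain $\|\Delta-\eta g\|_2^2\le(1-2\eta m_k+\eta^2 M_k^2)\|\Delta\|_2^2$, plug in $\eta=m_k/M_k^2$ so the coefficient collapses to $1-(m_k/M_k)^2$, and take square roots. I expect the only real obstacle to be the restricted-Lipschitz estimate in the third paragraph; the remainder is routine strong-convexity/smoothness bookkeeping, the one thing to watch being that each auxiliary point introduced differs from ${\bf x}$ or ${\bf y}$ only on coordinates in $S$, so that the index-$k$ RSC/RSS inequalities genuinely apply.
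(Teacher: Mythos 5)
Your proof is correct and shares the paper's overall skeleton: both arguments expand $\|\Delta - \tfrac{m_k}{M_k^2}\, g\|_2^2$ with $\Delta = {\bf x}-{\bf y}$ and $g = \nabla_S f({\bf x}) - \nabla_S f({\bf y})$, lower-bound $\langle \Delta, g\rangle$ by $m_k\|\Delta\|_2^2$ by summing two copies of the RSC inequality, upper-bound $\|g\|_2$ by $M_k\|\Delta\|_2$, and note that the coefficient then collapses to $1 - m_k^2/M_k^2$. The difference lies entirely in the restricted Lipschitz estimate. The paper invokes Nesterov's Theorem 2.1.5 to get $\langle \nabla f({\bf x}) - \nabla f({\bf y}), \Delta\rangle \ge \tfrac{1}{L}\|\nabla f({\bf x}) - \nabla f({\bf y})\|_2^2$ with $L = M_k$ and then restricts to $S$; that is a global co-coercivity statement about the unrestricted gradient, and it is not literally licensed by the restricted smoothness hypothesis alone --- exactly the issue you flag. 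Your detour through $\psi({\bf z}) = f({\bf z}) - \langle \nabla f({\bf y}), {\bf z}\rangle$ and the three RSC/RSS applications with $S$-supported increments is the restricted analogue of the standard co-coercivity proof and closes that gap; it buys a self-contained argument that uses only the stated hypothesis, at the cost of some extra bookkeeping. You are also right that the lemma implicitly requires $\mathrm{supp}({\bf x}-{\bf y})\subseteq S$: the paper uses this silently in the line $({\bf x}-{\bf y}) = ({\bf x}-{\bf y})_S$ and in every subsequent invocation of the lemma.
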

\begin{proof}
By adding two copies of the inequality~(\ref{definition_RSC}) with ${\bf x}$ and ${\bf y}$, we have
\begin{equation}
m_k \| {\bf x} -{\bf y} \|_2^2 \leq \langle \nabla f({\bf x}) - \nabla f({\bf y}),{\bf x}-{\bf y}\rangle \leq M_k \| {\bf x} -{\bf y} \|_2^2, \forall \| {\bf x} - {\bf y} \|_0 \leq k .
\label{equation_3}
\end{equation}
\noindent By Theorem 2.1.5 in~\cite{nesterov2013introductory}, we have $\langle \nabla f({\bf x}) - \nabla f({\bf y}),{\bf x}-{\bf y}\rangle \geq \frac{1}{L} \| \nabla f({\bf x}) - \nabla f({\bf y})\|_2^2$, which means
\begin{equation}
\| \nabla_S f({\bf x}) - \nabla_S f({\bf y}) \|_2^2 \leq  \| \nabla f({\bf x}) - \nabla f({\bf y}) \|_2^2 \leq M_k L \| {\bf x} - {\bf y} \|_2^2.
\end{equation}
Let $L = M_k$ and then $\| \nabla_S f({\bf x}) - \nabla_S f({\bf y}) \|_2 \leq M_k \| {\bf x} - {\bf y} \|_2$. The left side of inequality~(\ref{equation_3}) is
\begin{equation}
m_k \| {\bf x} -{\bf y} \|_2^2 \leq \langle \nabla f({\bf x}) - \nabla f({\bf y}),{\bf x}-{\bf y}\rangle = ({\bf x} - {\bf y})^T (\nabla_S f({\bf x}) - \nabla_S f({\bf y})). 
\label{equation_11}
\end{equation}
\noindent The last equation of (~\ref{equation_11}) follows by ${\bf x-y} = ({\bf x -y})_{S}$. For any ${\bf a}$ and ${\bf b}$, we have $\| {\bf a} - {\bf b} \|_2^2 = \| {\bf a} \|_2^2 + \| {\bf b} \|_2^2 - 2 {\bf a}^T {\bf b}$. By replacing ${\bf a}$ as $({\bf x} - {\bf y})$ and ${\bf b}$ as $\frac{m_k}{M_k^2} \Big( \nabla_S f({\bf x}) - \nabla_S f({\bf y}) \Big)$, we have

\begin{eqnarray}
\| {\bf x} - {\bf y} -\frac{m_k}{M_k^2} \Big( \nabla_S f({\bf x}) - \nabla_S f({\bf y}) \Big) \|_2^2 & =& \| {\bf x} - {\bf y} \|_2^2 + \frac{m_k^2}{M_k^4} \| \nabla_S f({\bf x}) - \nabla_S f({\bf y}) \|_2^2 \\
& -& \frac{2m_k}{M_k^2}({\bf x} - {\bf y})^T(\nabla_S f({\bf x}) - \nabla_S f({\bf y})) \nonumber \\
& \leq& (1+ \frac{m_k^2}{M_k^2}- \frac{2m_k^2}{M_k^2}) \| {\bf x} - {\bf y} \|_2^2 \nonumber \\
& =& (1 - \frac{m_k^2}{M_k^2}) \| {\bf x} - {\bf y} \|_2^2.
\label{inequal_}
\end{eqnarray}
By taking the square root for both sides of~(\ref{inequal_}), we can prove the result. If one follows Lemma 1 in \cite{yuan2013gradient} by replacing $\delta$ as $\frac{m_k}{M_k^2}$ and $\rho_s $ as $\sqrt{1- (\frac{m_k}{M_k})^2}$, one can also get same result.
\end{proof}

\begin{theorem}Consider the graph-structured sparsity model $\mathbb{M}(k, g)$ for some $k, g\in \mathbb{N}$ and a cost function $f: \mathbb{R}^n \rightarrow \mathbb{R}$ that satisfies condition $\left(m_k, M_k, \mathbb{M}(8k, g)\right)$-RSC/RSS. If \begin{small}$\alpha_0 = c_H - \sqrt{1 - \frac{m_k^2}{M_k^2}} \cdot (1 + c_H), $\end{small} then for any true ${\bf x} \in \mathbb{R}^n$ with $\text{supp}({\bf x}) \in \mathbb{M}(k, g)$, the iterates of Algorithm 1 obey
\begin{small}
\begin{eqnarray}
\|{\bf x}^{i+1}-{\bf x}\|_2 \le \frac{M_k(1+c_T)\sqrt{1 - \alpha_0^2}}{M_k -\sqrt{M_k^2 - m_k^2}}  \cdot \|{\bf x}^i-{\bf x}\|_2 + \frac{m_k(1 + c_T)}{M_k^2 - M_k \sqrt{M_k^2 - m_k^2}} \Big(\frac{1 + c_H+\alpha_0}{\alpha_0} + \frac{\alpha_0 (1 + c_H)}{\sqrt{1 - \alpha_0^2}} \Big)\|\nabla_I f({\bf x})\|_2,\nonumber \label{decay-rate-0}
\end{eqnarray}
\end{small}
\noindent where \begin{small}$ I = \arg \max_{S \in \mathbb{M}(8k, g)} \|\nabla_{S} f({\bf x})\|_2 $\end{small}
\label{theorem:convergence-0}
\end{theorem}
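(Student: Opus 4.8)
The plan is to run the argument in close parallel with the proof of Theorem~\ref{theorem_4.2_SRL}, using Lemma~\ref{lemma_3} as the RSC/RSS surrogate for the SRL estimates used there. Set $\delta:=\sqrt{1-m_k^2/M_k^2}$, so that $1-\delta=(M_k-\sqrt{M_k^2-m_k^2})/M_k$, $M_k^2-M_k\sqrt{M_k^2-m_k^2}=M_k^2(1-\delta)$, and $\alpha_0=c_H(1-\delta)-\delta$; write ${\bf r}^i:={\bf x}^i-{\bf x}$. I would prove the three-link chain
\begin{equation}
\|{\bf r}^{i+1}\|_2\le(1+c_T)\|{\bf b}-{\bf x}\|_2\le\tfrac{1+c_T}{1-\delta}\|{\bf r}^i_{\Omega^c}\|_2+\tfrac{m_k(1+c_T)}{M_k^2(1-\delta)}\|\nabla_\Omega f({\bf x})\|_2\le\tfrac{(1+c_T)\sqrt{1-\alpha_0^2}}{1-\delta}\|{\bf r}^i\|_2+(\text{noise}),
\end{equation}
from which the claimed coefficients fall out by the three identities above; I call these the \emph{tail link}, the \emph{optimality link}, and the \emph{head link}. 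The tail link is literally the triangle-inequality-plus-tail-approximation step of Theorem~\ref{theorem_4.2_SRL}, so nothing new is needed there.

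For the optimality link, first record the cardinality bookkeeping: $\Gamma=\textbf{H}(\nabla f({\bf x}^i))\in\mathbb{M}(2k,g)$ and $\text{supp}({\bf x}^i)\in\mathbb{M}(5k,g)$, so $|\Omega|\le 7k$ and $|\text{supp}({\bf b}-{\bf x})|\le 8k$ — this is exactly why RSC/RSS is imposed at level $8k$. Decompose ${\bf b}-{\bf x}=({\bf b}-{\bf x})_\Omega+({\bf b}-{\bf x})_{\Omega^c}$; the second piece equals $-{\bf x}_{\Omega^c}=-{\bf r}^i_{\Omega^c}$ because ${\bf b}_{\Omega^c}={\bf 0}$ and $\text{supp}({\bf x}^i)\subseteq\Omega$. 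Since ${\bf b}$ minimizes $f$ over vectors supported in $\Omega$, the optimality condition gives $\nabla_\Omega f({\bf b})={\bf 0}$; feeding this into Lemma~\ref{lemma_3} for the pair ${\bf x},{\bf b}$ yields $\|({\bf b}-{\bf x})_\Omega-\tfrac{m_k}{M_k^2}\nabla_\Omega f({\bf x})\|_2\le\delta\|{\bf b}-{\bf x}\|_2$, hence $\|({\bf b}-{\bf x})_\Omega\|_2\le\delta\|{\bf b}-{\bf x}\|_2+\tfrac{m_k}{M_k^2}\|\nabla_\Omega f({\bf x})\|_2$. Combining with $\|{\bf b}-{\bf x}\|_2\le\|({\bf b}-{\bf x})_\Omega\|_2+\|{\bf r}^i_{\Omega^c}\|_2$ and solving for $\|{\bf b}-{\bf x}\|_2$ gives the optimality link, and $\|\nabla_\Omega f({\bf x})\|_2\le\|\nabla_I f({\bf x})\|_2$ since $\Omega$ lies in the model over which $I$ is maximal.

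The head link is the crux. As $\Gamma\subseteq\Omega$, $\|{\bf r}^i_{\Omega^c}\|_2\le\|{\bf r}^i_{\Gamma^c}\|_2$, so I must bound $\|{\bf r}^i_{\Gamma^c}\|_2$. Put $R:=\text{supp}({\bf r}^i)$ (so $R\in\mathbb{M}^{+}$, $|R|\le 6k$) and ${\bf p}:=\tfrac{m_k}{M_k^2}\nabla f({\bf x}^i)$; $\textbf{H}$ is scale-invariant, so $\Gamma=\textbf{H}({\bf p})$. Lemma~\ref{lemma_3} with $S=R$ applied to ${\bf x}^i,{\bf x}$ gives $\|{\bf r}^i-\tfrac{m_k}{M_k^2}(\nabla_R f({\bf x}^i)-\nabla_R f({\bf x}))\|_2\le\delta\|{\bf r}^i\|_2$; applying it instead with $S=\Gamma\cup R$ (still a superset of $R=\text{supp}({\bf r}^i)$, with $|S|\le 8k$) and splitting the resulting vector over the orthogonal blocks $R$ and $\Gamma\setminus R$ sharpens this, after adding $\tfrac{m_k}{M_k^2}\|\nabla_I f({\bf x})\|_2$ for the noise, to $\|({\bf p}-{\bf r}^i)_\Gamma\|_2\le\epsilon_0:=\delta\|{\bf r}^i\|_2+\tfrac{m_k}{M_k^2}\|\nabla_I f({\bf x})\|_2$ (it is this orthogonal-block structure, rather than a crude triangle split, that keeps the distortion at a single $\delta$ and so makes $\alpha_0$ come out as $c_H(1-\delta)-\delta$ rather than $c_H(1-\delta)-2\delta$). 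Now the head guarantee (with $\mathbb{M}^{+}$-target, as in Theorem~\ref{theorem_4.2_SRL}) gives $\|{\bf p}_\Gamma\|_2\ge c_H\|{\bf p}_R\|_2\ge c_H(\|{\bf r}^i\|_2-\epsilon_0)$, while $\|{\bf p}_\Gamma\|_2\le\|{\bf r}^i_\Gamma\|_2+\epsilon_0$; combining and substituting $\epsilon_0$ yields $\|{\bf r}^i_\Gamma\|_2\ge\alpha_0\|{\bf r}^i\|_2-(1+c_H)\tfrac{m_k}{M_k^2}\|\nabla_I f({\bf x})\|_2$. Plugging this into $\|{\bf r}^i_{\Gamma^c}\|_2^2=\|{\bf r}^i\|_2^2-\|{\bf r}^i_\Gamma\|_2^2$ exactly as in Lemma~9 of~\cite{hegde2015approximation} gives $\|{\bf r}^i_{\Gamma^c}\|_2\le\sqrt{1-\alpha_0^2}\,\|{\bf r}^i\|_2+\tfrac{(1+c_H)m_k}{M_k^2}\bigl(\tfrac1{\alpha_0}+\tfrac{\alpha_0}{\sqrt{1-\alpha_0^2}}\bigr)\|\nabla_I f({\bf x})\|_2$. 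Substituting back through the optimality and tail links, the $\|{\bf r}^i\|_2$-coefficient becomes $\tfrac{(1+c_T)\sqrt{1-\alpha_0^2}}{1-\delta}=\tfrac{M_k(1+c_T)\sqrt{1-\alpha_0^2}}{M_k-\sqrt{M_k^2-m_k^2}}$, and factoring $\tfrac{m_k(1+c_T)}{M_k^2(1-\delta)}=\tfrac{m_k(1+c_T)}{M_k^2-M_k\sqrt{M_k^2-m_k^2}}$ out of the two noise contributions leaves $1+(1+c_H)\bigl(\tfrac1{\alpha_0}+\tfrac{\alpha_0}{\sqrt{1-\alpha_0^2}}\bigr)=\tfrac{1+c_H+\alpha_0}{\alpha_0}+\tfrac{\alpha_0(1+c_H)}{\sqrt{1-\alpha_0^2}}$, which is the claimed estimate.

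I expect the head link to be the main obstacle. The non-obvious step is to invoke Lemma~\ref{lemma_3} with the enlarged support $S=\Gamma\cup R$: Lemma~\ref{lemma_3} needs the difference of its two argument vectors to be supported inside $S$, which fails for $S=\Gamma$ or $S=\Omega$ but holds for $S=\Gamma\cup R$, and the orthogonal-block decomposition it then supplies is precisely what stops the two $\delta$-distortions from doubling and pins $\alpha_0$ down exactly; the same budget constraints are what force $k_H=2k$, $k_T=5k$ so that every support stays inside the $8k$ RSC/RSS window, and a comparable (but routine) care with supports is needed for the optimality link, where one uses $\nabla_\Omega f({\bf b})={\bf 0}$ to reduce the relevant restricted gradient difference to $\nabla_\Omega f({\bf x})$. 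Everything else — the tail link, and collecting the constants once one observes $M_k-\sqrt{M_k^2-m_k^2}=M_k(1-\delta)$ — is bookkeeping.
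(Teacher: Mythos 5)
Your proposal is correct and follows essentially the same route as the paper: the tail-approximation step, the optimality of ${\bf b}$ combined with Lemma~\ref{lemma_3} to control $\|{\bf x}-{\bf b}\|_2$ by $\|{\bf r}^i_{\Gamma^c}\|_2$ plus a gradient term, and then the head-approximation bound $\|{\bf r}^i_{\Gamma^c}\|_2\le\sqrt{1-\alpha_0^2}\|{\bf r}^i\|_2+\bigl[\tfrac{\beta_0}{\alpha_0}+\tfrac{\alpha_0\beta_0}{\sqrt{1-\alpha_0^2}}\bigr]\|\nabla_I f({\bf x})\|_2$, which is exactly the paper's Lemma~\ref{lemma:r-Complement-SRC} (your inline derivation of it, working with the scaled gradient and the comparison set $R=\text{supp}({\bf r}^i)$, is an equivalent and if anything more support-careful rendering of the paper's lower/upper bounds on $\|\nabla_\Gamma f({\bf x}^i)\|_2$). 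The constant bookkeeping, including the identity $1+\tfrac{1+c_H}{\alpha_0}=\tfrac{1+c_H+\alpha_0}{\alpha_0}$, matches the stated theorem.
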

\begin{proof}
Let ${\bf r}^{i+1} = {\bf x}^{i+1} - {\bf x}$. $\|{\bf r}^{i+1}\|_2$ is upper bounded as 
\begin{small}
\begin{eqnarray}
\|{\bf r}^{i+1}\| = \|{\bf x}^{i+1} - {\bf x}\|_2 
&\le & \|{\bf x}^{i+1} - {\bf b}\|_2  + \|{\bf x} - {\bf b}\|_2 \nonumber\\
&\le& c_T \|{\bf x} - {\bf b}\|_2  + \|{\bf x} - {\bf b}\|_2 \nonumber\\
&\le& (1 + c_T) \|{\bf x} - {\bf b}\|_2, \nonumber
\end{eqnarray}
\end{small}

\vspace{-4mm}
\noindent which follows from the definition of tail approximation. 
The component $\|({\bf x} - {\bf b})_\Omega\|_2^2$ is upper bounded as 
\begin{small}
\begin{eqnarray}
\|({\bf x} - {\bf b})_\Omega\|_2^2 &= & \langle {\bf b} - {\bf x}, ({\bf b} - {\bf x})_\Omega \rangle \nonumber \\ 
&= & \langle {\bf b} - {\bf x} - \frac{m_k}{M_k^2} \nabla_\Omega f({\bf b}) + \frac{m_k}{M_k^2} \nabla_\Omega f({\bf x}), ({\bf b} - {\bf x})_\Omega\rangle -     \langle \frac{m_k}{M_k^2} \nabla_\Omega f({\bf x}), ({\bf b} - {\bf x})_\Omega \rangle \nonumber \\
 &\le& \sqrt{1-\frac{m_k^2}{M_k^2}} \|{\bf b} - {\bf x}\|_2\cdot \|({\bf b} - {\bf x})_\Omega\|_2 + \frac{m_k}{M_k^2} \|\nabla_\Omega f({\bf x})\|_2 \cdot \|({\bf b} - {\bf x})_\Omega\|_2 \nonumber,
\end{eqnarray}
\end{small}

\vspace{-3mm}
\noindent where the second equality follows from the fact that $\nabla_\Omega f({\bf b}) = 0$ since ${\bf b}$ is the solution to the problem in Step 6 of Algorithm 1, and the last inequality follows from Lemma~\ref{lemma_3}. 
After simplification, we have 
\[\|({\bf x} - {\bf b})_\Omega\|_2 \le \sqrt{1-\frac{m_k^2}{M_k^2}} \|{\bf b} - {\bf x}\|_2 + \frac{m_k}{M_k^2} \|\nabla_\Omega f({\bf x})\|_2\]
\noindent It follows that
\begin{small}\begin{eqnarray}
\|{\bf x} - {\bf b}\|_2 \le \|({\bf x} - {\bf b})_\Omega\|_2 + \|({\bf x} - {\bf b})_{\Omega^c}\|_2 
\le \sqrt{1-\frac{m_k^2}{M_k^2}} \|{\bf b} - {\bf x}\|_2 + \frac{m_k}{M_k^2} \|\nabla_\Omega f({\bf x})\|_2 + \|({\bf x} - {\bf b})_{\Omega^c}\|_2 \nonumber 
\end{eqnarray}\end{small}
\noindent After rearrangement we obtain 
\begin{small}
\begin{eqnarray}
\|{\bf b} - {\bf x}\|_2 & \le& \frac{M_k}{M_k - \sqrt{M_k^2 - m_k^2}} \Big( \|({\bf b} - {\bf x})_{\Omega^c}\|_2 + \frac{m_k}{M_k^2} \|\nabla_\Omega f({\bf x})\|_2 \Big) \nonumber \\
& =& \frac{M_k}{M_k - \sqrt{M_k^2 - m_k^2}} \Big( \|{\bf x}_{\Omega^c}\|_2 + \frac{m_k}{M_k^2} \|\nabla_\Omega f({\bf x})\|_2 \Big) \nonumber \\
& =& \frac{M_k}{M_k - \sqrt{M_k^2 - m_k^2}} \Big( \|({\bf x} - {\bf x}^i)_{\Omega^c}\|_2 + \frac{m_k}{M_k^2} \|\nabla_\Omega f({\bf x})\|_2 \Big) \nonumber \\
& =& \frac{M_k}{M_k - \sqrt{M_k^2 - m_k^2}} \Big( \|{\bf r}_{\Omega^c}\|_2 + \frac{m_k}{M_k^2} \|\nabla_\Omega f({\bf x})\|_2 \Big) \nonumber \\
& \le& \frac{M_k}{M_k - \sqrt{M_k^2 - m_k^2}} \Big(\|{\bf r}^i_{\Gamma^c}\|_2 + \frac{m_k}{M_k^2} \|\nabla_\Omega f({\bf x})\|_2 \Big) \nonumber 
\end{eqnarray}
\end{small}
\noindent where the first equality follows from the fact that $\text{supp}({\bf b}) \subseteq \Omega$, the second and last inequalities follow from the fact that $\Omega = \Gamma \cup \text{supp}({\bf x}^i)$. Combining above inequalities, we obtain 
\begin{small}
\begin{eqnarray}
\|{\bf r}^{i+1}\|_2 \le \frac{M_k (1+c_T)}{M_k - \sqrt{M_k^2 - m_k^2}}  \Big(\|{\bf r}^i_{\Gamma^c}\|_2 + \frac{m_k}{M_k^2} \|\nabla_I f({\bf x})\|_2 \Big) \nonumber
\end{eqnarray}
\end{small}

\vspace{-3mm}
\noindent From Lemma~\ref{lemma:r-Complement-SRC}, we have 
\begin{small}
\begin{eqnarray}
\|{\bf r}^i_{\Gamma^c}\|_2  \le \sqrt{1 - \alpha_0^2} \|{\bf r}^i\|_2 +\left[\frac{\beta_0}{\alpha_0} + \frac{\alpha_0\beta_0}{\sqrt{1-\alpha_0^2}}\right] \|\nabla_I f({\bf x})\|_2\end{eqnarray}
\end{small}
\noindent Combining the above inequalities, we prove the theorem. 
\end{proof}

\begin{lemma}
Let ${\bf r}^i = {\bf x}^i - {\bf x}$ and $\Gamma = H(\nabla f({\bf x}^i))$. Then
\begin{small}
\begin{eqnarray}
\|{\bf r}^i_{\Gamma^c}\|_2  \le \sqrt{1 - \alpha_0^2} \|{\bf r}^i\|_2 +\left[\frac{\beta_0}{\alpha_0} + \frac{\alpha_0\beta_0}{\sqrt{1-\alpha_0^2}}\right] \|\nabla_I f({\bf x})\|_2\end{eqnarray}
\end{small}
\noindent ,where \begin{small}$\alpha_0 = c_H - \sqrt{1 - \frac{m_k^2}{M_k^2}} \cdot (1 + c_H), $\end{small} and \begin{small}$\beta_0 = \frac{m_k(1+c_H)}{M_k^2}$\end{small}, and \begin{small} $I = \arg \max_{S \in \mathbb{M}(8k, g)} \|\nabla_S f({\bf x})\|_2.$\end{small} We assume that $c_H$ and $\sqrt{1- \frac{m_s^2}{M_s^2}}$ are such that $\alpha_0 > 0$.  \label{lemma:r-Complement-SRC}
\end{lemma}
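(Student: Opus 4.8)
The target is a bound showing that the head-selected support $\Gamma$ already captures most of the current error ${\bf r}^i$; this is exactly the estimate that is plugged into the proof of Theorem~\ref{theorem:convergence-0}. The plan has two stages: first establish a linear lower bound $\|{\bf r}^i_\Gamma\|_2 \ge \alpha_0\|{\bf r}^i\|_2 - \beta_0\|\nabla_I f({\bf x})\|_2$ with $\alpha_0,\beta_0$ as in the statement, and then convert it into the claimed upper bound on $\|{\bf r}^i_{\Gamma^c}\|_2$ via the Pythagorean identity $\|{\bf r}^i\|_2^2 = \|{\bf r}^i_\Gamma\|_2^2 + \|{\bf r}^i_{\Gamma^c}\|_2^2$. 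Throughout, write $R = \text{supp}({\bf r}^i) = \text{supp}({\bf x}^i - {\bf x})$. Since ${\bf x}^i$ is produced by a tail projection, $\text{supp}({\bf x}^i)\in\mathbb{M}(5k,g)$, and $\text{supp}({\bf x})\in\mathbb{M}(k,g)$, so $R\in\mathbb{M}^+$ with $|R|\le 6k$; together with $\Gamma\in\mathbb{M}(2k,g)$ this gives $|R\cup\Gamma|\le 8k$, which is what makes the $\mathbb{M}(8k,g)$-RSC/RSS hypothesis and the head-approximation guarantee of~\cite{hegde2015nearly} (which dominates every support in $\mathbb{M}^+$) applicable below.

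For the first stage, I apply Lemma~\ref{lemma_3} to the pair $({\bf x}^i,{\bf x})$ with index set $S = R\cup\Gamma$; since $\text{supp}({\bf x}^i-{\bf x})\subseteq S$ and $|S|\le 8k$, the argument of Lemma~\ref{lemma_3} goes through and yields the vector ${\bf w} = {\bf r}^i - \frac{m_k}{M_k^2}\big(\nabla_S f({\bf x}^i) - \nabla_S f({\bf x})\big)$, supported on $S$, with $\|{\bf w}\|_2 \le \sqrt{1-m_k^2/M_k^2}\,\|{\bf r}^i\|_2$. Restricting the identity ${\bf r}^i = {\bf w} + \frac{m_k}{M_k^2}(\nabla_S f({\bf x}^i)-\nabla_S f({\bf x}))$ to $\Gamma\subseteq S$ and using the reverse triangle inequality gives
\[
\|{\bf r}^i_\Gamma\|_2 \ge \frac{m_k}{M_k^2}\,\|(\nabla f({\bf x}^i)-\nabla f({\bf x}))_\Gamma\|_2 - \sqrt{1-m_k^2/M_k^2}\,\|{\bf r}^i\|_2 .
\]
I then lower-bound the gradient term: $\|(\nabla f({\bf x}^i)-\nabla f({\bf x}))_\Gamma\|_2 \ge \|\nabla_\Gamma f({\bf x}^i)\|_2 - \|\nabla_I f({\bf x})\|_2$; the head-approximation guarantee (applicable since $R\in\mathbb{M}^+$) gives $\|\nabla_\Gamma f({\bf x}^i)\|_2 \ge c_H\|\nabla_R f({\bf x}^i)\|_2$; and restricted strong convexity gives $\|\nabla_R f({\bf x}^i)-\nabla_R f({\bf x})\|_2 \ge m_k\|{\bf r}^i\|_2$ (from $m_k\|{\bf r}^i\|_2^2 \le \langle\nabla f({\bf x}^i)-\nabla f({\bf x}),{\bf r}^i\rangle = \langle(\nabla f({\bf x}^i)-\nabla f({\bf x}))_R,{\bf r}^i\rangle$ and Cauchy--Schwarz), hence $\|\nabla_R f({\bf x}^i)\|_2 \ge m_k\|{\bf r}^i\|_2 - \|\nabla_I f({\bf x})\|_2$. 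Substituting and collecting terms yields $\|{\bf r}^i_\Gamma\|_2 \ge \big(\tfrac{c_H m_k^2}{M_k^2} - \sqrt{1-m_k^2/M_k^2}\big)\|{\bf r}^i\|_2 - \tfrac{m_k(1+c_H)}{M_k^2}\|\nabla_I f({\bf x})\|_2$; finally, because $m_k^2/M_k^2 \ge 1 - \sqrt{1-m_k^2/M_k^2}$ (as $t\ge t^2$ for $t=\sqrt{1-m_k^2/M_k^2}\in[0,1]$), the coefficient of $\|{\bf r}^i\|_2$ is at least $\alpha_0 = c_H - \sqrt{1-m_k^2/M_k^2}(1+c_H)$, and the noise coefficient is exactly $\beta_0 = \tfrac{m_k(1+c_H)}{M_k^2}$.

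For the second stage, I split into two cases, mirroring Lemma~9 of~\cite{hegde2015approximation}. If $\alpha_0\|{\bf r}^i\|_2 - \beta_0\|\nabla_I f({\bf x})\|_2 \ge 0$, I square the lower bound, discard the $+\beta_0^2\|\nabla_I f({\bf x})\|_2^2$ term, plug into $\|{\bf r}^i_{\Gamma^c}\|_2^2 = \|{\bf r}^i\|_2^2 - \|{\bf r}^i_\Gamma\|_2^2$ to get $\|{\bf r}^i_{\Gamma^c}\|_2^2 \le (1-\alpha_0^2)\|{\bf r}^i\|_2^2 + 2\alpha_0\beta_0\|{\bf r}^i\|_2\|\nabla_I f({\bf x})\|_2$, and complete the square to recognize the right-hand side as at most $\big(\sqrt{1-\alpha_0^2}\,\|{\bf r}^i\|_2 + \tfrac{\alpha_0\beta_0}{\sqrt{1-\alpha_0^2}}\|\nabla_I f({\bf x})\|_2\big)^2$. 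Otherwise $\|{\bf r}^i\|_2 < \tfrac{\beta_0}{\alpha_0}\|\nabla_I f({\bf x})\|_2$, and the trivial estimate $\|{\bf r}^i_{\Gamma^c}\|_2 \le \|{\bf r}^i\|_2$ already gives $\|{\bf r}^i_{\Gamma^c}\|_2 < \tfrac{\beta_0}{\alpha_0}\|\nabla_I f({\bf x})\|_2$. Adding the nonnegative $\tfrac{\beta_0}{\alpha_0}\|\nabla_I f({\bf x})\|_2$ to the first estimate makes a single inequality cover both cases, which is precisely the assertion of the lemma. The hypothesis $\alpha_0>0$ is used both to make the head step non-vacuous and to legitimize dividing by $\alpha_0$ in the second case, while $\alpha_0<1$ (automatic from $c_H<1$) keeps $\sqrt{1-\alpha_0^2}$ real.

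I expect the only genuine obstacle to be the support-size bookkeeping: checking that $|R\cup\Gamma|\le 8k$ so the $\mathbb{M}(8k,g)$ constants apply, and justifying $\|\nabla_\Gamma f({\bf x}^i)\|_2 \ge c_H\|\nabla_R f({\bf x}^i)\|_2$ and $\|\nabla_R f({\bf x})\|_2\le\|\nabla_I f({\bf x})\|_2$ even though $R\notin\mathbb{M}(k,g)$ — one relies on $R\in\mathbb{M}^+$ being the class that the head approximation of~\cite{hegde2015nearly} is constructed to dominate, and on $R$ being contained in a support counted by $\mathbb{M}(8k,g)$ for the noise comparison. Everything else is routine triangle-inequality and complete-the-square manipulation, entirely parallel to the SRL analysis in the proof of Theorem~\ref{theorem_4.2_SRL}.
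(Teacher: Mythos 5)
Your proof is correct and follows the same two-stage skeleton as the paper's: first the linear lower bound $\|{\bf r}^i_\Gamma\|_2 \ge \alpha_0\|{\bf r}^i\|_2 - \beta_0\|\nabla_I f({\bf x})\|_2$, then the Pythagorean identity plus a two-case split. The differences are in execution. For stage one the paper sandwiches $\|\nabla_\Gamma f({\bf x}^i)\|_2$ between a lower bound (head guarantee applied to $\Phi=\text{supp}({\bf x})\in\mathbb{M}(k,g)$, so the plain head guarantee suffices) and an upper bound obtained by inserting $\pm{\bf r}^i_\Gamma$ and invoking Lemma~\ref{lemma_3}; you instead restrict the contraction identity of Lemma~\ref{lemma_3} to $\Gamma$ directly and lower-bound the resulting gradient term via the head guarantee applied to $R=\text{supp}({\bf r}^i)\in\mathbb{M}^{+}$, which requires the head oracle to dominate supports in $\mathbb{M}^{+}$ rather than just $\mathbb{M}(k,g)$ (a stronger property that the paper invokes in its SRL analysis but avoids here by using $\Phi$). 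Your route produces the slightly stronger coefficient $c_H m_k^2/M_k^2 - \sqrt{1-m_k^2/M_k^2}$, which you correctly relax to $\alpha_0$ via $1-t^2\ge 1-t$; the noise coefficient $\beta_0$ comes out identically. For stage two, your complete-the-square argument is more elementary than, and numerically identical to, the paper's $\omega$-parametrized inequality optimized at $\omega=\alpha_0$; your handling of the degenerate case also quietly corrects the paper's Case~1, which writes $\|{\bf r}^i_{\Gamma^c}\|_2\le(\beta_0/\alpha_0)\|{\bf r}^i\|_2$ where $(\beta_0/\alpha_0)\|\nabla_I f({\bf x})\|_2$ is what is meant and used.
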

\begin{proof}
Denote $\Phi = \text{supp}({\bf x}) \in \mathbb{M}(k, g), \Gamma = H(\nabla f({\bf x}^i)) \in \mathbb{M}(2k, g)$, ${\bf r}^i = {\bf x}^i - {\bf x}$, and $\Omega = \text{supp}({\bf r}^i) \in \mathbb{M}(6k, g)$. The component $\|\nabla_\Gamma f({\bf x}^i)\|_2$ can be lower bounded as
\begin{small}
\begin{eqnarray}
\|\nabla_\Gamma f({\bf x}^i)\|_2 &\ge& c_H (\| \nabla_\Phi f({\bf x}^i)- \nabla_\Phi f({\bf x}) \|_2  - \|\nabla_\Phi f({\bf x})\|_2) \nonumber\\
&\ge& c_H   \frac{M_2^2 - M_k \sqrt{M_k^2 - m_k^2}}{m_k}  \|{\bf r}^i\|_2 - c_H \|\nabla_I f({\bf x})\|_2, \nonumber
\end{eqnarray} \end{small}

\vspace{-3mm}
\noindent where the last inequality follows from Lemma~\ref{lemma:twoinequalities}. 
The component $\|\nabla_\Gamma f({\bf x}^i)\|_2$ can also be upper bounded as
\vspace{-1mm}
\begin{small}
\begin{eqnarray}
\|\nabla_\Gamma f({\bf x}^i)\|_2 &\le& \frac{M_k^2}{m_k} \|\frac{m_k}{M_k^2} \nabla_\Gamma f({\bf x}^i)- \frac{m_k}{M_k^2}\nabla_\Gamma f({\bf x})\|_2 + \|\nabla_\Gamma f({\bf x})\|_2 \nonumber \\
&\le&  \frac{M_k^2}{m_k} \| \frac{m_k}{M_k^2}  \nabla_\Gamma f({\bf x}^i) - \frac{m_k}{M_k^2}  \nabla_\Gamma f({\bf x}) - {\bf r}^i_\Gamma + {\bf r}^i_\Gamma\|_2 + \|\nabla_\Gamma f({\bf x})\|_2 \nonumber\\
&\le &  \frac{M_k^2}{m_k}  \| \frac{m_k}{M_k^2} \nabla_{\Gamma\cup \Omega} f({\bf x}^i) - \frac{m_k}{M_k^2} \nabla_{\Gamma\cup \Omega} f({\bf x}) - {\bf r}^i_{\Gamma\cup \Omega}\|_2 + \frac{M_k^2}{m_k} \|{\bf r}^i_\Gamma\|_2 + \|\nabla_\Gamma f({\bf x})\|_2 \nonumber\\
&\le& \frac{M_k \sqrt{M_k^2 - m_k^2}}{m_k} \cdot  \|{\bf r}^i\|_2 + \frac{M_k^2}{m_k} \|{\bf r}^i_\Gamma\|_2 + \|\nabla_{I} f({\bf x})\|_2, \nonumber
\end{eqnarray}
\end{small}

\vspace{-3mm}
\noindent where the last inequality follows from condition $(\xi, \delta, \mathbb{M}(8k, g))$-RSC/RSS and the fact that ${\bf r}^i_{\Gamma\cup \Omega} = {\bf r}^i$. Combining the two bounds and grouping terms, we have 
\begin{eqnarray}
\|{\bf r}^i_\Gamma\|_2  &\ge& \alpha_0 \cdot  \|{\bf r}^i\|_2 - \beta_0 \cdot \|\nabla_I f(x)\|_2
\end{eqnarray}
,where $\alpha_0 = \Big[ c_H - \sqrt{1 - \frac{m_k^2}{M_k^2}} \cdot (1 + c_H) \Big]$ and $\beta_0 = \frac{m_k(1+c_H)}{M_k^2}$. 
We assume that the constant $\delta = \sqrt{1- \frac{m_k^2}{M_k^2}}$ is small enough such that $c_H > \frac{\delta}{1-\delta}$. We consider two cases.

\textbf{Case 1}: The value of $\|{\bf r}^i\|_2$ satisfies  $\alpha_0 \|{\bf r}^i\|_2 \le  \beta_0 \|\nabla f({\bf x})\|_2$. Then consider the vector ${\bf r}^i_{\Gamma^c}$. We have 
\begin{eqnarray}
\|{\bf r}^i_{\Gamma^c}\|_2 \le \frac{\beta_0}{\alpha_0} \|{\bf r}^i\|_2 \nonumber
\end{eqnarray}

\textbf{Case 2}: The value of $\|{\bf r}^i\|_2$  satisfies $\alpha_0  \|{\bf r}^i\|_2 \ge  \beta_0 \|\nabla f({\bf x})\|_2$. We get 
\[
\|{\bf r}^i_\Gamma\|_2 \ge \|{\bf r}^i\|_2 \left(\alpha_0 - \frac{\beta_0 \|\nabla_I f({\bf x})\|_2 }{\|{\bf r}^i\|_2} \right)
\]

Moreover, we also have $\|{\bf r}^i\|_2 = \|{\bf r}^i_\Gamma\|_2^2 + \|{\bf r}^i_{\Gamma^c}\|_2$. Therefore, we obtain 
\[
\|{\bf r}^i_{\Gamma^c}\|_2 \le \|{\bf r}^i\|_2 \sqrt{1 - \left(\alpha_0 - \frac{\beta_0 \|\nabla_I f({\bf x})\|_2 }{\|{\bf r}^i\|_2} \right)^2}. 
\]

We have the following inequality, for a given $0 < \omega_0 < 1$  and a free parameter $0 < \omega < 1$, a straightfoward calculation yields that $ \sqrt{1-\omega^2} \le \frac{1}{\sqrt{1 - \omega^2}} - \frac{\omega}{\sqrt{1-\omega^2}} \omega_0$. Therefore, substituting into the bound for $\|{\bf r}^i_{\Gamma^c}\|_2$, we get 
\begin{eqnarray}
\|{\bf r}^i_{\Gamma^c}\|_2  &\le& \|{\bf r}^i\|_2 \left(\frac{1}{\sqrt{1 - \omega^2}} - \frac{\omega}{\sqrt{1-\omega^2}} \left(\alpha_0 - \frac{\beta_0 \|\nabla_I f({\bf x})\|_2}{\|{\bf r}^i\|_2} \right)\right) \\
&=& \frac{1 - w\alpha_0 }{\sqrt{1 - \omega^2}} \|{\bf r}^i\|_2 + \frac{\omega\beta_0}{\sqrt{1-\omega^2}} \|\nabla_I f({\bf x})\|_2
\end{eqnarray}

The coefficient prceding $\|{\bf r}^i\|_2$ determines the overall convergence rate, and the minimum value of the coefficient is attained by setting $\omega = \alpha_0$. Substituting, we obtain 
\begin{equation}
\|{\bf r}^i_{\Gamma^c}\|_2  \le \sqrt{1 - \alpha_0^2} \|{\bf r}^i\|_2 +\left[\frac{\beta_0}{\alpha_0} + \frac{\alpha_0\beta_0}{\sqrt{1-\alpha_0^2}}\right] \|\nabla_I f({\bf x})\|_2,
\end{equation}
which proves the lemma. 

\end{proof}

\section{Theoretical Analysis of \textsc{Graph}-\textsc{Mp} under WRSC condition}
In order to demonstrate the accuracy of estimates using Algorithm 1 we require a variant of the \textit{Restricted Strong Convexity/Smoothness} (RSC/RSS) conditions proposed in~\cite{yuan2014icml} to hold. The RSC condition basically characterizes cost functions that have quadratic bounds on the derivative of the objective function when restricted to model-sparse vectors. The condition we rely on, the Weak Restricted Strong Convexity (WRSC), can be formally defined as follows:
\begin{definition}[Weak Restricted Strong Convexity  Property (WRSC)]
A function $f({\bf x})$ has condition ($\xi$, $\delta$, $\mathbb{M}$)-WRSC if $\forall {\bf x}, {\bf y} \in \mathbb{R}^n$  and $\forall  S \in \mathbb{M}$ with $\text{supp}({\bf x}) \cup \text{supp}({\bf y}) \subseteq S $, the following inequality holds for some $\xi > 0$ and $0 < \delta < 1$: 
\begin{eqnarray}
\|{\bf x} - {\bf y} - \xi \nabla_S f({\bf x}) + \xi \nabla_S f({\bf y})\|_2 \le \delta \|{\bf x} - {\bf y}\|_2. 
\end{eqnarray}
\end{definition}

\begin{remark}
1) In the special case where $f({\bf x}) = \|{\bf y} - A{\bf x}\|_2^2$ and $\xi = 1$, condition ($\xi$, $\delta$, $\mathbb{M}$)-WRSC reduces to the well known Restricted Isometry Property (RIP) condition in compressive sensing. 2) The RSC and RSS conditions imply condition WRSC, which indicates that condition WRSC is no stronger than the RSC and RSS conditions~\cite{yuan2014icml}. 
\end{remark}

\begin{lemma}\cite{yuan2014icml}
Assume that $f$ is a differentiable function. If $f$ satisfies condition $(\xi, \delta, \mathbb{M})$-WRSC, then $\forall {\bf x}, {\bf y} \in \mathbb{R}^n$ with $\text{supp}({\bf x})\cup \text{supp}({\bf y})\subset S \in \mathbb{M}$, the following two inequalities hold
\begin{small}
\begin{eqnarray}
\frac{1 - \delta}{\xi} \|{\bf x} - {\bf y}\|_2 \le \|\nabla_S f({\bf x}) - \nabla_S f({\bf y})\|_2 \le \frac{1 + \delta}{\xi} \|{\bf x} - {\bf y}\|_2, \nonumber \\
f({\bf x}) \le f({\bf y}) + \langle \nabla f({\bf y}), {\bf x} - {\bf y} \rangle + \frac{1+\delta}{2\xi} \|{\bf x} - {\bf y}\|_2^2. \nonumber 
 \end{eqnarray}
 \end{small}\label{lemma:twoinequalities}
 \vspace{-2mm}
\end{lemma}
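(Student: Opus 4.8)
The plan is to derive both lines of the lemma directly from the single defining inequality of $(\xi,\delta,\mathbb{M})$-WRSC, namely $\|({\bf x}-{\bf y}) - \xi(\nabla_S f({\bf x}) - \nabla_S f({\bf y}))\|_2 \le \delta\|{\bf x}-{\bf y}\|_2$. The first line comes from two applications of the (reverse) triangle inequality, and the second from integrating the gradient along the segment joining ${\bf y}$ and ${\bf x}$ and invoking the first line pointwise.

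For the first line, I would set ${\bf u} = {\bf x}-{\bf y}$ and ${\bf v} = \xi(\nabla_S f({\bf x}) - \nabla_S f({\bf y}))$, so that WRSC reads $\|{\bf u}-{\bf v}\|_2 \le \delta\|{\bf u}\|_2$. The triangle inequality gives $\|{\bf v}\|_2 \le \|{\bf u}\|_2 + \|{\bf u}-{\bf v}\|_2 \le (1+\delta)\|{\bf u}\|_2$, and the reverse triangle inequality gives $\|{\bf v}\|_2 \ge \|{\bf u}\|_2 - \|{\bf u}-{\bf v}\|_2 \ge (1-\delta)\|{\bf u}\|_2$. Dividing through by $\xi > 0$ yields $\frac{1-\delta}{\xi}\|{\bf x}-{\bf y}\|_2 \le \|\nabla_S f({\bf x}) - \nabla_S f({\bf y})\|_2 \le \frac{1+\delta}{\xi}\|{\bf x}-{\bf y}\|_2$.

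For the second line, I would parametrize ${\bf z}(t) = {\bf y} + t({\bf x}-{\bf y})$ for $t\in[0,1]$ and use the fundamental theorem of calculus:
\[
f({\bf x}) - f({\bf y}) - \langle \nabla f({\bf y}), {\bf x}-{\bf y}\rangle = \int_0^1 \langle \nabla f({\bf z}(t)) - \nabla f({\bf y}),\, {\bf x}-{\bf y}\rangle\, dt.
\]
The key observation is that $\text{supp}({\bf z}(t)) \subseteq S$ and $\text{supp}({\bf x}-{\bf y}) \subseteq S$, so the inner product depends only on coordinates in $S$: $\langle \nabla f({\bf z}(t)) - \nabla f({\bf y}), {\bf x}-{\bf y}\rangle = \langle \nabla_S f({\bf z}(t)) - \nabla_S f({\bf y}), {\bf x}-{\bf y}\rangle$. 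Applying Cauchy--Schwarz and then the upper bound just established (valid because $\text{supp}({\bf z}(t)) \cup \text{supp}({\bf y}) \subseteq S \in \mathbb{M}$) bounds the integrand by $\frac{1+\delta}{\xi}\|{\bf z}(t)-{\bf y}\|_2\|{\bf x}-{\bf y}\|_2 = \frac{1+\delta}{\xi}\, t\, \|{\bf x}-{\bf y}\|_2^2$; integrating over $t\in[0,1]$ contributes the factor $\tfrac{1}{2}$ and gives $f({\bf x}) \le f({\bf y}) + \langle \nabla f({\bf y}), {\bf x}-{\bf y}\rangle + \frac{1+\delta}{2\xi}\|{\bf x}-{\bf y}\|_2^2$.

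The only point requiring care — and the closest thing to an obstacle — is justifying that the first inequality may be applied inside the integral: this needs the common support of ${\bf z}(t)$ and ${\bf y}$ to lie in a single model set of $\mathbb{M}$, which holds because the coordinate subspace $\{{\bf w} : \text{supp}({\bf w}) \subseteq S\}$ is closed under the convex combination defining ${\bf z}(t)$, given the hypothesis $\text{supp}({\bf x}) \cup \text{supp}({\bf y}) \subseteq S \in \mathbb{M}$. Everything else is a routine one-line manipulation, so I expect no further difficulty.
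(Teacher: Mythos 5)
Your proof is correct. The paper itself gives no proof of this lemma --- it is quoted directly from Yuan et al.\ \cite{yuan2014icml} --- and your argument (triangle and reverse triangle inequalities applied to the WRSC inequality for the first line, then integration of the gradient along the segment with the newly established Lipschitz bound for the second) is exactly the standard derivation given there. The one subtlety you flag, applying the first inequality inside the integral, is indeed the only point needing care, and it is fully justified: the segment $\mathbf{z}(t)$ stays in the coordinate subspace indexed by $S$, and the first inequality even guarantees that $t \mapsto \langle \nabla f(\mathbf{z}(t)), \mathbf{x}-\mathbf{y}\rangle$ is Lipschitz, so the fundamental theorem of calculus applies under mere differentiability of $f$.
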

\begin{lemma}
Let ${\bf r}^i = {\bf x}^i - {\bf x}$ and $\Gamma = H(\nabla f({\bf x}^i))$. Then
\begin{small}
\begin{eqnarray}
\|{\bf r}^i_{\Gamma^c}\|_2 \le \sqrt{1 - \eta^2} \|{\bf r}^i\|_2 + \left[\frac{\xi(1 + c_H)}{\eta} + \frac{\xi \eta (1 + c_H)}{\sqrt{1 - \eta^2}}\right] \|\nabla_I f({\bf x})\|_2,\nonumber
\end{eqnarray}
\end{small}

\vspace{-2mm}
\noindent where \begin{small}$\eta = c_H(1 - \delta) - \delta$\end{small} and \begin{small}
$I = \arg \max_{S \in \mathbb{M}(8k, g)} \|\nabla_S f({\bf x})\|_2.$\end{small} We assume that $c_H$ and $\delta$ are such that $\eta > 0$.  \label{lemma:r-Complement}
\end{lemma}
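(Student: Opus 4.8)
The plan is to prove Lemma~\ref{lemma:r-Complement} by running the argument of Lemma~\ref{lemma:r-Complement-SRC} verbatim, only swapping out the RSC/RSS gradient estimates for their WRSC counterparts: the two-sided bound of Lemma~\ref{lemma:twoinequalities} plays the role of the RSC/RSS bounds on $\|\nabla_S f({\bf x}) - \nabla_S f({\bf y})\|_2$, and the defining inequality of the WRSC property plays the role of Lemma~\ref{lemma_3}. Fix $\Phi = \text{supp}({\bf x}) \in \mathbb{M}(k,g)$, $\Gamma = H(\nabla f({\bf x}^i)) \in \mathbb{M}(2k,g)$, ${\bf r}^i = {\bf x}^i - {\bf x}$, $\Omega = \text{supp}({\bf r}^i)$, and $I = \arg\max_{S \in \mathbb{M}(8k,g)}\|\nabla_S f({\bf x})\|_2$. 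Note that $\text{supp}({\bf x}^i) \cup \text{supp}({\bf x}) \subseteq \Omega$ and that $\Gamma \cup \Omega$ is contained in a support of $\mathbb{M}(8k,g)$, which is precisely the sparsity level at which the WRSC condition is assumed to hold, so Lemma~\ref{lemma:twoinequalities} and the WRSC inequality may be applied with $S = \Omega$ or $S = \Gamma \cup \Omega$.

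First I would lower bound $\|\nabla_\Gamma f({\bf x}^i)\|_2$. By the head-approximation guarantee (invoked against the residual support $\Omega$, exactly as in the proof of Lemma~\ref{lemma:r-Complement-SRC}) and the triangle inequality, $\|\nabla_\Gamma f({\bf x}^i)\|_2 \ge c_H\|\nabla_\Omega f({\bf x}^i)\|_2 \ge c_H(\|\nabla_\Omega f({\bf x}^i) - \nabla_\Omega f({\bf x})\|_2 - \|\nabla_\Omega f({\bf x})\|_2)$. The lower inequality of Lemma~\ref{lemma:twoinequalities} bounds the first term below by $\tfrac{1-\delta}{\xi}\|{\bf r}^i\|_2$, and $\|\nabla_\Omega f({\bf x})\|_2 \le \|\nabla_I f({\bf x})\|_2$, giving $\|\nabla_\Gamma f({\bf x}^i)\|_2 \ge \tfrac{c_H(1-\delta)}{\xi}\|{\bf r}^i\|_2 - c_H\|\nabla_I f({\bf x})\|_2$. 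Next I would upper bound the same quantity: $\|\nabla_\Gamma f({\bf x}^i)\|_2 \le \|\nabla_\Gamma f({\bf x}^i) - \nabla_\Gamma f({\bf x})\|_2 + \|\nabla_\Gamma f({\bf x})\|_2$, and for the first term I restrict the WRSC inequality for $S = \Gamma \cup \Omega$ to the coordinates in $\Gamma$ (which only shrinks the norm) and use ${\bf r}^i_{\Gamma \cup \Omega} = {\bf r}^i$, so $\|{\bf r}^i_\Gamma - \xi\nabla_\Gamma f({\bf x}^i) + \xi\nabla_\Gamma f({\bf x})\|_2 \le \delta\|{\bf r}^i\|_2$; a triangle inequality then yields $\|\nabla_\Gamma f({\bf x}^i) - \nabla_\Gamma f({\bf x})\|_2 \le \tfrac1\xi\|{\bf r}^i_\Gamma\|_2 + \tfrac{\delta}{\xi}\|{\bf r}^i\|_2$, and $\|\nabla_\Gamma f({\bf x})\|_2 \le \|\nabla_I f({\bf x})\|_2$. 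Combining the two bounds and clearing the factor $\xi$ produces the intermediate inequality $\|{\bf r}^i_\Gamma\|_2 \ge \eta\|{\bf r}^i\|_2 - \xi(1+c_H)\|\nabla_I f({\bf x})\|_2$ with $\eta = c_H(1-\delta) - \delta$ — the exact analogue of $\|{\bf r}^i_\Gamma\|_2 \ge \alpha_0\|{\bf r}^i\|_2 - \beta_0\|\nabla_I f({\bf x})\|_2$ in Lemma~\ref{lemma:r-Complement-SRC}, now with $\alpha_0 \mapsto \eta$ and $\beta_0 \mapsto \xi(1+c_H)$.

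Finally I would reuse the two-case argument of Lemma~\ref{lemma:r-Complement-SRC} unchanged. If $\eta\|{\bf r}^i\|_2 \le \xi(1+c_H)\|\nabla_I f({\bf x})\|_2$, then $\|{\bf r}^i_{\Gamma^c}\|_2 \le \|{\bf r}^i\|_2 \le \tfrac{\xi(1+c_H)}{\eta}\|\nabla_I f({\bf x})\|_2$. Otherwise, from $\|{\bf r}^i\|_2^2 = \|{\bf r}^i_\Gamma\|_2^2 + \|{\bf r}^i_{\Gamma^c}\|_2^2$ and the intermediate inequality, $\|{\bf r}^i_{\Gamma^c}\|_2 \le \|{\bf r}^i\|_2\sqrt{1 - (\eta - \xi(1+c_H)\|\nabla_I f({\bf x})\|_2/\|{\bf r}^i\|_2)^2}$, and applying the same elementary bound $\sqrt{1-\omega_0^2} \le (1-\omega\omega_0)/\sqrt{1-\omega^2}$ with the free parameter optimized to $\omega = \eta$ gives $\|{\bf r}^i_{\Gamma^c}\|_2 \le \sqrt{1-\eta^2}\|{\bf r}^i\|_2 + \tfrac{\xi\eta(1+c_H)}{\sqrt{1-\eta^2}}\|\nabla_I f({\bf x})\|_2$. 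Since $\eta \in (0,1)$ by hypothesis, adding the nonnegative coefficient $\tfrac{\xi(1+c_H)}{\eta}$ to the second case and adding $\sqrt{1-\eta^2}\|{\bf r}^i\|_2$ to the first makes the single estimate in the statement valid in both cases, which proves the lemma.

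The step I expect to be the main obstacle is not any of the algebra but the support bookkeeping around the head-approximation call: one must check that $\text{supp}({\bf x}^i) \cup \text{supp}({\bf x})$ and $\Gamma \cup \text{supp}({\bf r}^i)$ genuinely lie inside the model at which $(\xi,\delta,\mathbb{M})$-WRSC is posited, and — since $\Omega = \text{supp}({\bf r}^i)$ can have up to roughly $6k$ nodes — justify that the head-approximation guarantee can be invoked against $\Omega$ rather than only against $\mathbb{M}(k,g)$ (this is the same subtlety present in Lemma~\ref{lemma:r-Complement-SRC}, handled via the enlarged model $\mathbb{M}^+$). Once these membership facts are pinned down, every remaining step is a triangle inequality, a restriction-decreases-the-norm step, or the single-variable optimization already carried out for the RSC/RSS version.
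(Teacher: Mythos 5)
Your proof follows the paper's argument for Lemma~\ref{lemma:r-Complement} essentially verbatim: the same lower and upper bounds on $\|\nabla_\Gamma f({\bf x}^i)\|_2$ via the head-approximation guarantee and the WRSC inequality applied on $\Gamma\cup\Omega$, the same intermediate bound $\|{\bf r}^i_\Gamma\|_2 \ge \eta\|{\bf r}^i\|_2 - \xi(1+c_H)\|\nabla_I f({\bf x})\|_2$, and the same two-case optimization over the free parameter $\omega$ (which the paper merely cites from Hegde et al.\ but which you carry out correctly, mirroring the paper's own Lemma~\ref{lemma:r-Complement-SRC}). The support-bookkeeping subtlety you flag around invoking the head approximation against the residual support is genuine, but it is present and glossed over in the paper's own proof as well, so your proposal matches the paper's route.
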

\begin{proof}
Denote $\Phi = \text{supp}({\bf x}) \in \mathbb{M}(k, g), \Gamma = H(\nabla f({\bf x}^i)) \in \mathbb{M}(2k, g)$, ${\bf r}^i = {\bf x}^i - {\bf x}$, and $\Omega = \text{supp}({\bf r}^i) \in \mathbb{M}(6k, g)$. The component $\|\nabla_\Gamma f({\bf x}^i)\|_2$ can be lower bounded as
\begin{small}
\begin{eqnarray}
\|\nabla_\Gamma f({\bf x}^i)\|_2 &\ge& c_H (\| \nabla_\Phi f({\bf x}^i)- \nabla_\Phi f({\bf x}) \|_2  - \|\nabla_\Phi f({\bf x})\|_2 )\nonumber\\
&\ge& \frac{c_H  (1 - \delta)}{\xi}   \|{\bf r}^i\|_2 - c_H \|\nabla_I f({\bf x})\|_2, \nonumber
\end{eqnarray} \end{small}

\vspace{-3mm}
\noindent where the last inequality follows from Lemma~\ref{lemma:twoinequalities}. 
The component $\|\nabla_\Gamma f({\bf x}^i)\|_2$ can also be upper bounded as
\vspace{-1mm}
\begin{small}
\begin{eqnarray}
\|\nabla_\Gamma f({\bf x}^i)\|_2 &\le&\frac{1}{\xi} \|\xi \nabla_\Gamma f({\bf x}^i)- \xi\nabla_\Gamma f({\bf x})\|_2 + \|\nabla_\Gamma f({\bf x})\|_2 \nonumber \\
&\le&  \frac{1}{\xi}  \|\xi \nabla_\Gamma f({\bf x}^i) -  \xi \nabla_\Gamma f({\bf x}) - {\bf r}^i_\Gamma + {\bf r}^i_\Gamma\|_2 +  \|\nabla_\Gamma f({\bf x})\|_2 \nonumber\\
&\le &  \frac{1}{\xi}  \| \xi \nabla_{\Gamma\cup \Omega} f({\bf x}^i) -  \xi \nabla_{\Gamma\cup \Omega} f({\bf x}) - {\bf r}^i_{\Gamma\cup \Omega}\|_2 + \|{\bf r}^i_\Gamma\|_2 + \|\nabla_\Gamma f({\bf x})\|_2 \nonumber\\
&\le&\frac{\delta}{\xi} \cdot  \|{\bf r}^i\|_2 + \frac{1}{\xi}\|{\bf r}^i_\Gamma\|_2+ \|\nabla_{I} f({\bf x})\|_2, \nonumber
\end{eqnarray}
\end{small}

\vspace{-3mm}
\noindent where the last inequality follows from condition $(\xi, \delta, \mathbb{M}(8k, g))$-WRSC and the fact that ${\bf r}^i_{\Gamma\cup \Omega} = {\bf r}^i$. Let  $\eta = \left(c_H \cdot (1 - \delta) - \delta\right)$. Combining the two bounds and grouping terms, we have \begin{small}$\|{\bf r}^i_\Gamma\|  \ge \eta \|{\bf r}^i\|_2 - \xi(1+c_H) \|\nabla_I f({\bf x})\|_2
$\end{small}. 
After a number of algebraic manipulations similar to those used in~\cite{hegde2014approximation} Page 11, we prove the lemma. 
\end{proof}

\begin{theorem}Consider the graph-structured sparsity model $\mathbb{M}(k, g)$ for some $k, g\in \mathbb{N}$ and a cost function $f: \mathbb{R}^n \rightarrow \mathbb{R}$ that satisfies condition $\left(\xi, \delta, \mathbb{M}(8k, g)\right)$-WRSC. If $\eta = c_H(1 - \delta) - \delta > 0$, then for any true ${\bf x} \in \mathbb{R}^n$ with $\text{supp}({\bf x}) \in \mathbb{M}(k, g)$, the iterates of Algorithm 1 obey
\begin{eqnarray}
\|{\bf x}^{i+1}-{\bf x}\|_2 \le \alpha \|{\bf x}^i-{\bf x}\|_2 + \beta \|\nabla_I f({\bf x})\|,\label{decay-rate-1}
\end{eqnarray}
\noindent where $\beta = \frac{\xi(1+c_T)}{1-\delta}  \left[\frac{(1 + c_H)}{\eta} + \frac{\eta (1 + c_H)}{\sqrt{1 - \eta^2}} + 1\right]$, $\alpha = \frac{(1+c_T)}{1-\delta}   \sqrt{1 - \eta^2}$, and $I = \arg \max_{S \in \mathbb{M}(8k, g)} \|\nabla_{S} f({\bf x})\|_2.$
\label{theorem:convergence}
\end{theorem}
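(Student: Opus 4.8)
The plan is to follow the proof of Theorem~\ref{theorem:convergence-0} almost line for line, but to invoke the $(\xi,\delta,\mathbb{M}(8k,g))$-WRSC property directly in place of the RSC/RSS-derived estimate of Lemma~\ref{lemma_3}; here $\xi$ takes the role of the step length $m_k/M_k^2$ and $\delta$ the role of $\sqrt{1-m_k^2/M_k^2}$. Write ${\bf r}^{i+1}={\bf x}^{i+1}-{\bf x}$. First I would handle the tail step: since ${\bf x}^{i+1}={\bf b}_B$ with $B=\text{T}({\bf b})$ and $\text{supp}({\bf x})\in\mathbb{M}(k,g)$ is a feasible competitor for the tail projection, the tail guarantee gives $\|{\bf x}^{i+1}-{\bf b}\|_2\le c_T\|{\bf b}-{\bf x}\|_2$, whence by the triangle inequality $\|{\bf r}^{i+1}\|_2\le(1+c_T)\|{\bf x}-{\bf b}\|_2$. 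It then suffices to bound $\|{\bf x}-{\bf b}\|_2$.

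Next I would split this norm across $\Omega=\Gamma\cup\text{supp}({\bf x}^i)$. On $\Omega^c$: because $\text{supp}({\bf b})\subseteq\Omega$ and ${\bf x}^i_{\Omega^c}={\bf 0}$, we get $({\bf x}-{\bf b})_{\Omega^c}={\bf x}_{\Omega^c}=({\bf x}-{\bf x}^i)_{\Omega^c}=-{\bf r}^i_{\Omega^c}$, and $\|{\bf r}^i_{\Omega^c}\|_2\le\|{\bf r}^i_{\Gamma^c}\|_2$ since $\Gamma\subseteq\Omega$. On $\Omega$: I would write $\|({\bf x}-{\bf b})_\Omega\|_2^2=\langle{\bf b}-{\bf x},({\bf b}-{\bf x})_\Omega\rangle$, insert and subtract $\xi\nabla_\Omega f({\bf b})-\xi\nabla_\Omega f({\bf x})$, and use the first-order optimality condition $\nabla_\Omega f({\bf b})={\bf 0}$ coming from Step~6 of Algorithm~\ref{alg:Graph-MP}; applying the WRSC inequality with $S=\Omega\cup\text{supp}({\bf x})\in\mathbb{M}(8k,g)$, restricting the resulting vector to $\Omega$, and using Cauchy--Schwarz yields $\|({\bf x}-{\bf b})_\Omega\|_2\le\delta\|{\bf x}-{\bf b}\|_2+\xi\|\nabla_\Omega f({\bf x})\|_2$. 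Adding the two pieces, bounding $\|\nabla_\Omega f({\bf x})\|_2\le\|\nabla_I f({\bf x})\|_2$ (valid since $\Omega\in\mathbb{M}(8k,g)$ and $I$ is the maximizer over $\mathbb{M}(8k,g)$), and rearranging gives $(1-\delta)\|{\bf x}-{\bf b}\|_2\le\|{\bf r}^i_{\Gamma^c}\|_2+\xi\|\nabla_I f({\bf x})\|_2$.

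Finally I would substitute the already-established bound of Lemma~\ref{lemma:r-Complement}, namely $\|{\bf r}^i_{\Gamma^c}\|_2\le\sqrt{1-\eta^2}\,\|{\bf r}^i\|_2+\bigl[\tfrac{\xi(1+c_H)}{\eta}+\tfrac{\xi\eta(1+c_H)}{\sqrt{1-\eta^2}}\bigr]\|\nabla_I f({\bf x})\|_2$ with $\eta=c_H(1-\delta)-\delta$, and chain the three estimates into $\|{\bf r}^{i+1}\|_2\le\tfrac{1+c_T}{1-\delta}\bigl(\|{\bf r}^i_{\Gamma^c}\|_2+\xi\|\nabla_I f({\bf x})\|_2\bigr)$. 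Collecting the coefficient of $\|{\bf r}^i\|_2$ yields $\alpha=\tfrac{(1+c_T)\sqrt{1-\eta^2}}{1-\delta}$, and collecting the coefficient of $\|\nabla_I f({\bf x})\|_2$ yields $\beta=\tfrac{\xi(1+c_T)}{1-\delta}\bigl[\tfrac{1+c_H}{\eta}+\tfrac{\eta(1+c_H)}{\sqrt{1-\eta^2}}+1\bigr]$, exactly the claimed recursion.

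The main obstacle is the sparsity-model bookkeeping: one must verify that $\Gamma=\text{H}(\nabla f({\bf x}^i))$ belongs to $\mathbb{M}(2k,g)$ and that $\text{supp}({\bf x}^i)$ belongs to $\mathbb{M}(5k,g)$ (by the tail budget $k_T=5k$), so that $\Omega$ and hence $S=\Omega\cup\text{supp}({\bf x})$ fall inside the $\mathbb{M}(8k,g)$ appearing in the WRSC hypothesis and in the definition of $I$. One must also be careful that passing from the WRSC inequality for $\nabla_S f$ to the $\nabla_\Omega f$ that occurs inside the inner product is legitimate: it is, because $\Omega\subseteq S$, so restricting the vector of length $\le\delta\|{\bf x}-{\bf b}\|_2$ to the coordinates of $\Omega$ only decreases its $\ell_2$ norm. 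Everything else duplicates the RSC/RSS argument; note too that $\eta<1$ holds automatically since $c_H<1$ and $0<\delta<1$, so $\sqrt{1-\eta^2}$ is well defined, while $\eta>0$ is precisely the stated hypothesis.
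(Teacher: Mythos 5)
Your proposal is correct and follows essentially the same route as the paper's own proof: the tail-approximation triangle inequality, the split of $\|{\bf x}-{\bf b}\|_2$ across $\Omega$ and $\Omega^c$ using the optimality condition $\nabla_\Omega f({\bf b})={\bf 0}$ together with WRSC, and the substitution of Lemma~\ref{lemma:r-Complement}. Your added remarks on the sparsity-model bookkeeping and on why restricting the WRSC vector to $\Omega$ is legitimate are points the paper passes over silently, but they do not change the argument.
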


\begin{proof}
Let ${\bf r}^{i+1} = {\bf x}^{i+1} - {\bf x}$. $\|{\bf r}^{i+1}\|_2$ is upper bounded as 
\begin{eqnarray}
\|{\bf r}^{i+1}\|_2 = \|{\bf x}^{i+1} - {\bf x}\|_2 
&\le & \|{\bf x}^{i+1} - {\bf b}\|_2  + \|{\bf x} - {\bf b}\|_2 \nonumber\\
&\le& c_T \|{\bf x} - {\bf b}\|_2  + \|{\bf x} - {\bf b}\|_2 \nonumber\\
&=& (1 + c_T) \|{\bf x} - {\bf b}\|_2, \nonumber
\end{eqnarray}
\noindent which follows from the definition of tail approximation. 
The component $\|({\bf x} - {\bf b})_\Omega\|_2^2$ is upper bounded as 
\begin{eqnarray}
\|({\bf x} - {\bf b})_\Omega\|_2^2 & =& \langle {\bf b} - {\bf x}, ({\bf b} - {\bf x})_\Omega \rangle \nonumber \\ 
& =& \langle {\bf b} - {\bf x} - \xi \nabla_\Omega f({\bf b}) + \xi \nabla_\Omega f({\bf x}), ({\bf b} - {\bf x})_\Omega\rangle -  \langle\xi \nabla_\Omega f({\bf x}), ({\bf b} - {\bf x})_\Omega \rangle \nonumber \\
 & \le& \delta \|{\bf b} - {\bf x}\|_2 \|({\bf b} - {\bf x})_\Omega\| + \xi \|\nabla_\Omega f({\bf x})\|_2 \|({\bf b} - {\bf x})_\Omega\|_2 \nonumber,
\end{eqnarray}
\noindent where the second equality follows from the fact that $\nabla_\Omega f({\bf b}) = 0$ since ${\bf b}$ is the solution to the problem in Step 6 of Algorithm 1, and the last inequality follows from condition $(\xi, \delta, \mathbb{M}(8k, g))$-WRSC. 
After simplification, we have 
\begin{equation}
\|({\bf x} - {\bf b})_\Omega\|_2 \le \delta \|{\bf b} - {\bf x}\|_2 + \xi \|\nabla_\Omega f({\bf x})\|_2.
\end{equation}
\noindent It follows that
\begin{eqnarray}
\|({\bf x} - {\bf b})\|_2 \le \|({\bf x} - {\bf b})_\Omega\|_2 + \|({\bf x} - {\bf b})_{\Omega^c}\|_2 \le \delta \|{\bf b} - {\bf x}\|_2 + \xi \|\nabla_\Omega f({\bf x})\|_2 + \|({\bf x} - {\bf b})_{\Omega^c}\|_2. \nonumber 
\end{eqnarray}
\noindent After rearrangement we obtain 
\begin{eqnarray}
\|{\bf b} - {\bf x}\|_2 & \le& \frac{\|({\bf b} - {\bf x})_{\Omega^c}\|_2}{1 - \delta} + \frac{\xi \|\nabla_\Omega f({\bf x})\|_2}{1 - \delta} \nonumber \\
& =& \frac{\|{\bf x}_{\Omega^c}\|_2}{1 - \delta} + \frac{\xi \|\nabla_\Omega f({\bf x})\|_2}{1 - \delta} 
= \frac{\|({\bf x} - {\bf x}^i)_{\Omega^c}\|_2}{1 - \delta} + \frac{\xi \|\nabla_\Omega f({\bf x})\|_2}{1 - \delta} \nonumber \\
& =& \frac{\|{\bf r}_{\Omega^c}^{i}\|_2}{1 - \delta} + \frac{\xi \|\nabla_\Omega f({\bf x})\|_2}{1 - \delta} 
\le \frac{\|{\bf r}^i_{\Gamma^c}\|_2}{1 - \delta} + \frac{\xi \|\nabla_\Omega f({\bf x})\|_2}{1 - \delta}, \nonumber 
\end{eqnarray}
\noindent where the first equality follows from the fact that $\text{supp}({\bf b}) \subseteq \Omega$, the second and last inequalities follow from the fact that $\Omega = \Gamma \cup \text{supp}({\bf x}^i)$. Combining above inequalities, we obtain 
\begin{eqnarray}
\|{\bf r}^{i+1}\|_2 \le (1 + c_T) \frac{\|{\bf r}^i_{\Gamma^c}\|_2}{1-\delta} + (1 + c_T) \frac{\xi \|\nabla_I f({\bf x})\|_2}{1-\delta}. \nonumber
\end{eqnarray}
\noindent From Lemma~\ref{lemma:r-Complement}, we have 
\begin{eqnarray}
\|{\bf r}^i_{\Gamma^c}\|_2 \le \sqrt{1 - \eta^2} \|{\bf r}^i\|_2 + \left[\frac{\xi(1 + c_H)}{\eta} + \frac{\xi \eta (1 + c_H)}{\sqrt{1 - \eta^2}}\right] \|\nabla_I f({\bf x})\|_2 \nonumber
\end{eqnarray}
\noindent Combining the above inequalities, we prove the theorem. 
\end{proof}

As indicated in Theorem~\ref{theorem:convergence}, under proper conditions the estimator error of \textsc{Graph}-\textsc{Mp} is determined by the multiplier of $\|\nabla_{S} f({\bf x})\|_2$, and the convergence rate before reaching this error level is geometric. In particular, if the true ${\bf x}$ is sufficiently close to an unconstrained minimum of $f$, then the estimation error is negligible because $\|\nabla_{S} f({\bf x})\|_2$ has a small magnitude. Especially, in the ideal case where $\nabla f({\bf x}) = 0$, it is guaranteed that we can obtain the true ${\bf x}$ to arbitrary precision. If we further assume that $\alpha = \frac{(1+c_T) \sqrt{1 - \eta^2}}{\sqrt{1-\delta}} < 1$, then exact recovery can be achieved in finite iterations. 

The shrinkage rate $\alpha < 1$ controls the convergence of \textsc{Graph}-\textsc{Mp}, and it implies that when $\delta$ is very small, the approximation factors $c_H$ and $c_T$  satisfy
\begin{eqnarray}
c^2_H > 1 - 1/ (1+c_T)^2.
\end{eqnarray}
We note that the head and tail approximation algorithms designed in \cite{hegde2015nearly} do not satisfy the above condition, with $c_T = \sqrt{7}$ and  $c_H = \sqrt{1/14}$. However, as proved in~\cite{hegde2015nearly}, the approximation factor $c_H$ of any given head approximation algorithm can be \textbf{boosted} to any arbitrary constant $c_H^\prime < 1$, such that the above condition is satisfied. Empirically it is not necessary to “boost” the
head-approximation algorithm as strongly as suggested by the analysis in~\cite{hegde2014approximation}. 

\begin{theorem}
Let ${\bf x} \in \mathbb{R}^n$ be a true optimum such that $\text{supp}({\bf x}) \in \mathbb{M}(k, g)$, and $f: \mathbb{R}^n \rightarrow \mathbb{R}$ be a  cost function that satisfies condition $\left(\xi, \delta, \mathbb{M}(8k, g)\right)$-WRSC. Assuming that $\alpha < 1$, \textsc{Graph}-\textsc{Mp} returns a $\hat{\bf x}$ such that,  $\text{supp}(\hat{\bf x}) \in \mathbb{M}(5k, g)$ and $\|{\bf x} - \hat{\bf x}\|_2 \le c \|\nabla_I f({\bf x})\|_2$, where $c= (1+\frac{\beta}{1-\alpha})$ is a fixed constant. Moreover, \textsc{Graph}-\textsc{Mp} runs in time 
\begin{eqnarray}
O\left((T+|\mathbb{E}|\log^3 n) \log (\|{\bf x}\|_2/ \|\nabla_I f({\bf x})\|_2)\right) \label{timecomplexity-0},
\end{eqnarray}
where $T$ is the time complexity of one execution of the subproblem in Line 6. In particular, if $T$ scales linearly with $n$, then  \textsc{Graph}-\textsc{Mp} scales nearly linearly with $n$. 
\end{theorem}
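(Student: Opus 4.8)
The plan is to convert the one-step bound of Theorem~\ref{theorem:convergence} into a global error guarantee by iterating it, exactly as in the SRL-based analysis given earlier in the paper. First I would unroll the recursion $\|{\bf x}^{i+1}-{\bf x}\|_2 \le \alpha \|{\bf x}^i-{\bf x}\|_2 + \beta \|\nabla_I f({\bf x})\|_2$ from the initialization ${\bf x}^0 = {\bf 0}$. After $i$ iterations this gives
\[
\|{\bf x}^i - {\bf x}\|_2 \le \alpha^i \|{\bf x}\|_2 + \beta \Big(\sum_{j=0}^{i-1} \alpha^j\Big) \|\nabla_I f({\bf x})\|_2 \le \alpha^i \|{\bf x}\|_2 + \frac{\beta}{1-\alpha} \|\nabla_I f({\bf x})\|_2,
\]
where the last step uses the hypothesis $\alpha < 1$ to sum the geometric series.

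Second, I would fix the number of iterations. Choosing $t = \lceil \log(\|{\bf x}\|_2 / \|\nabla_I f({\bf x})\|_2) / \log(1/\alpha) \rceil$ makes the transient term satisfy $\alpha^t \|{\bf x}\|_2 \le \|\nabla_I f({\bf x})\|_2$, so the output $\hat{\bf x} = {\bf x}^t$ obeys $\|{\bf x} - \hat{\bf x}\|_2 \le (1 + \frac{\beta}{1-\alpha}) \|\nabla_I f({\bf x})\|_2 = c \|\nabla_I f({\bf x})\|_2$, which is the claimed accuracy (when $\nabla_I f({\bf x}) = {\bf 0}$ one lets $t\to\infty$ and recovers ${\bf x}$ exactly). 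The support claim is immediate from Line 8 of Algorithm~\ref{alg:Graph-MP}: $\hat{\bf x} = {\bf b}_B$ with $B = \mathrm{T}({\bf b}) \in \mathbb{M}(k_T, g)$, and since the tail approximation of~\cite{hegde2015nearly} uses $k_T = 5k$, we get $\text{supp}(\hat{\bf x}) \subseteq B \in \mathbb{M}(5k, g)$.

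Third, for the running time I would bound the cost of a single iteration: the gradient evaluation in Line 3 and the restricted minimization in Line 6 together cost $T$, while the head approximation in Line 4 and the tail approximation in Line 7 each run in $O(|\mathbb{E}|\log^3 n)$ by~\cite{hegde2015nearly}, and the set operations in Lines 5 and 8 are cheaper. Hence one iteration costs $O(T + |\mathbb{E}|\log^3 n)$, and multiplying by the $t = O(\log(\|{\bf x}\|_2 / \|\nabla_I f({\bf x})\|_2))$ iterations yields the stated bound, which is nearly linear in $n$ when $T$ is linear in $n$. I do not anticipate a genuine obstacle: the argument is a routine geometric-series unrolling, and the only points needing care are that $\alpha < 1$ (precisely the hypothesis) is what makes both the series and the choice of $t$ legitimate, and that the subproblem cost $T$ must be carried explicitly since the analysis does not control it.
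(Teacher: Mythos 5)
Your proposal is correct and follows essentially the same route as the paper's proof: unroll the per-iteration bound of Theorem~\ref{theorem:convergence} from ${\bf x}^0 = {\bf 0}$, sum the geometric series using $\alpha < 1$, pick $t = \lceil \log(\|{\bf x}\|_2/\|\nabla_I f({\bf x})\|_2)/\log(1/\alpha)\rceil$, and multiply the iteration count by the per-iteration cost $O(T + |\mathbb{E}|\log^3 n)$. The only addition is your explicit justification of the support claim via $k_T = 5k$, a detail the paper's proof leaves implicit.
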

\begin{proof}
The i-th iterate of Algorithm 1 satisfies
\begin{eqnarray}
\|{\bf x} - {\bf x}^i\|_2 \le \alpha^i \|{\bf x}\|_2 + \frac{\beta}{1-\alpha} \|\nabla_I f({\bf x})\|_2.
\end{eqnarray}
\noindent After $t = \left\lceil \log \left(\frac{\|{\bf x}\|_2}{\|\nabla_I f({\bf x})\|_2}\right) / \log \frac{1}{\alpha} \right\rceil $ iterations,
Algorithm 1 returns an estimate $\hat{x}$ satisfying $\|{\bf x} - \hat{{\bf x}}\|_2 \le (1 + \frac{\beta}{1 - \alpha}) \|\nabla_I f({\bf x})\|_2.$ The time complexities of both head and tail approximations are $O(|\mathbb{E}| \log^3 n)$. The time complexity of one iteration in Algorithm 1 is $(T+|\mathbb{E}|\log^3 n)$, and the total number of iterations is $\left \lceil \log \left(\frac{\|{\bf x}\|_2}{\|\nabla_I f({\bf x})\|_2}\right) / \log \frac{1}{\alpha} \right \rceil $, and the overall time complexity follows. 
\end{proof}

\begin{remark}
The previous algorithm \textsc{Graph}-\textsc{Cosamp}~\cite{hegde2015nearly} for compressive sensing is a special case of \textsc{Graph}-\textsc{Mp}. Assume $f({\bf x}) = \|{\bf y} - {\bf A}{\bf x}\|_2^2$. 1) \textbf{Reduction.} The gradient in Step 3 of \textsc{Graph}-\textsc{Mp} has the form: $\nabla f({\bf x}^i) = -{\bf A}^T({\bf y}-{\bf A}{\bf x}^i)$, and an analytical form of ${\bf b}$ in Step 6 can be obtained as: ${\bf b}_\Omega = {\bf A}^+_\Omega {\bf y}$ and ${\bf b}_{\Omega^c} = 0$, where ${\bf A}^+ = {\bf A}^T({\bf A}^T{\bf A})^{-1}$, which indicates that  \textsc{Graph}-\textsc{Mp} reduces to \textsc{Graph}-\textsc{Cosamp} in this scenario. 2) \textbf{Shrinkage rate.} The shrinkage rate $\alpha$ of \textsc{Graph}-\textsc{Mp} is analogous to that of \textsc{Graph}-\textsc{Cosamp}, even though that the shrinkage rate of \textsc{Graph}-\textsc{Cosamp} is optimized based on the $RIP$ sufficient constants. In particular, they are identical when $\delta$ is very small. 3) \textbf{Constant component.}
Assume that $\xi = 1$. Condition $(\xi, \delta, \mathbb{M}(k, g))$-WRSC then reduces to the RIP condition in compressive sensing. 
Let ${\bf e} = {\bf y}-{\bf A}{\bf x}$. The component $\|\nabla f({\bf x}^i)\|_2 = \|{\bf A}^T {\bf e}\|_2$ is upper bounded by $\sqrt{1 + \delta} \|{\bf e}\|_2$~\cite{hegde2014approximation}. The constant $ \beta \|\nabla_I f({\bf x})\|$ is then upper bounded by $\frac{\xi(1+c_T)\sqrt{1 + \delta}}{1-\delta}  \left[\frac{(1 + c_H)}{\eta} + \frac{\eta (1 + c_H)}{\sqrt{1 - \eta^2}} + 1\right]\|{\bf e}\|_2$ that is analogous to the constant of \textsc{Graph}-\textsc{Cosamp}, and they are identical when $\delta$ is very small. 
\end{remark}

\section{Application in Graph Scan Statistic Models}
In this section, we specialize \textsc{Graph}-\textsc{Mp} to optimize a number of graph scan statistic models for the task of connected subgraph detection. Given a graph $\mathbb{G} = (\mathbb{V}, \mathbb{E})$, where $\mathbb{V} = [n]$, $\mathbb{E} \subseteq \mathbb{V} \times \mathbb{V}$, and each node $v$ is associated with a vector of features ${\bf c}(v) \in \mathbb{R}^p$.  Let $S\subseteq \mathbb{V}$ be a connected subset of nodes. 
A graph scan statistic, $F(S) = \log \frac{\text{Prob}(\text{Data} | H_1(S))}{\text{Prob}(\text{Data} | H_0)}$, corresponds to the generalized likelihood ratio test (GLRT) to verify the null hypothesis ($H_0$): ${\bf c}(v) \sim \mathcal{D}_1, \forall v \in \mathbb{V}$, where $\mathcal{D}_1$ refers to a predefined background distribution, against the alternative hypothesis ($H_1(S)$): ${\bf c}(v) \sim \mathcal{D}_2, \forall v \in S$ and ${\bf c}(v) \sim \mathcal{D}_1, \forall v \in \mathbb{V} \setminus S$, where $\mathcal{D}_2$  refers to a predefined signal distribution. 
The detection problem is formulated as
\begin{eqnarray}
\min_{S \subseteq \mathbb{V}} -F(S)\ \ \ s.t.\ \ \ |S| \le k \text{ and  } S \text{ is connected},
\end{eqnarray}
\noindent where $k$ is a predefined bound on the size of $S$. 

Taking elevated mean scan (EMS) statistic for instance, it aims to decide between $H_0: {\bf c}(v) \sim \mathcal{N}(0, 1), \forall v \in \mathbb{V}$ and $H_1(S)$: ${\bf c}(v) \sim \mathcal{N}(\mu, 1), \forall v \in S$ and ${\bf c}(v) \sim \mathcal{N}(0, 1), \forall v \in \mathbb{V} \setminus S$, where for simplicity each node $v$ only has a univariate feature $c(v) \in \mathbb{R}$. 
This statistic is popularly used for detecting signals among node-level numerical features on graph~\cite{qian2014connected,arias2011detection} and is formulated as $F(S) = (\sum_{v \in S} c(v))^2 / |S|$. Let the vector form of $S$ be $x\in \{0, 1\}^n$, such that $\text{supp}(x) = S$. The connected subgraph detection problem can be reformulated as
\begin{eqnarray}
\min_{x \in \{0, 1\}^n} - \frac{({\bf c}^T {\bf x})^2}{({\bf 1}^T {\bf x})}  \ \  \ s.t.\ \  \ \text{supp}({\bf x}) \in \mathbb{M}(k, g = 1), \label{emss}
\end{eqnarray}
\noindent where ${\bf c} = [c(1), \cdots, c(n)]^T$. To apply \textsc{Graph}-\textsc{Mp}, we relax the input domain of ${\bf x}$ such that ${x \in [0, 1]^n}$, and the connected subset of nodes can be found as $S=\text{supp}({\bf x}^\star)$, the support set of the estimate ${\bf x}^\star$ that minimizes the strongly convex function~\cite{bach2011learning}: 
\begin{eqnarray}
\min_{x \in \mathbb{R}^n} f({\bf x}) = - \frac{({\bf c}^T {\bf x})^2}{({\bf 1}^T {\bf x})} + \frac{1}{2} {\bf x}^T {\bf x} \  \ s.t.\  \text{supp}({\bf x}) \in \mathbb{M}(k, 1).\nonumber 
\end{eqnarray}
Assume that ${\bf c}$ is normalized, and hence $0 \le c_i < 1, \forall i$. Let $\hat{c} = \max \{c_i\}$.  The Hessian matrix of the above objective function $\nabla^2 f({\bf x})\succ 0$ and satisfies the inequalities:
\begin{eqnarray}
 (1 - \hat{c}^2) \cdot  \textbf{I}\preceq \textbf{I} - ({\bf c} - \frac{{\bf c}^T {\bf x}}{{\bf 1}^T {\bf x}} {\bf 1} ) ({\bf c} - \frac{{\bf c}^T {\bf x}}{{\bf 1}^T {\bf x}} {\bf 1})^T \preceq 1\cdot \textbf{I}. 
\end{eqnarray}
\noindent According to Lemma 1 (b) in \cite{yuan2014icml}),  the objective function $f({\bf x})$ satisfies condition $(\xi, \delta, \mathbb{M}(8k, g))$-WRSC that $\delta = \sqrt{1 - 2 \xi (1 - \hat{c}^2) + \xi^2},$ for any $\xi$ such that $\xi < 2 (1 - \hat{c}^2)$. Hence, the geometric convergence of  \textsc{Graph}-\textsc{Mp} as shown in Theorem~\ref{theorem:convergence} is guaranteed. We note that not all the graph scan statistic functions satisfy the WRSC condition, but, as shown in our experiments, \textsc{Graph}-\textsc{Mp} works empirically well for all the scan statistic functions tested, and the maximum number of iterations to convergence for optimizing each of these scan statistic functions was less than 10. 

We note that our proposed method \textsc{Graph}-\textsc{Mp} is also applicable to general sparse learning problems (e.g. sparse logistic regression, sparse principle component analysis) subject to graph-structured constraints, and to a variety of subgraph detection problems, such as the detection of anomalous subgraphs, bursty subgraphs, heaviest subgraphs, frequent subgraphs or communication motifs, predictive subgraphs, and compression subgraphs.

\section{Experiments}
This section evaluates the effectiveness and efficiency of the proposed \textsc{Graph}-\textsc{Mp} approach for  connected subgraph detection. The implementation of \textsc{Graph}-\textsc{Mp} is available at https://github.com/baojianzhou/Graph-MP. 
\subsection{Experiment Design}

{\bf\ \ \ \ Datasets: } 1) \textbf{Water Pollution Dataset.} The Battle of the
Water Sensor Networks (BWSN)~\cite{ostfeld2008battle} provides a real-world
network of 12,527 nodes and 14831 edges, and 4 nodes
with chemical contaminant plumes that are distributed in
four different areas. The spreads of contaminant plumes
 were simulated using the water network simulator
EPANET for 8 hours.
For each hour, each node has a sensor that reports 1 if it
is polluted; 0, otherwise. We randomly selected $K$
percent nodes, and flipped their sensor binary values in order to test the robustness of methods
to noises, where
$K \in \{2, 4, 6, 8, 10\}$. \textbf{The objective is to detect the set of polluted nodes}. 2) \textbf{High-Energy Physics Citation Network.} The CitHepPh (high energy physics phenomenology) citation graph is from the e-print arXiv and covers all the citations within a dataset of 34,546 papers with 421,578 edges during the period from January 1993 to April 2002. Each paper is considered as a node, each citation is considered as a edge (direction is not considered), and each node has one attribute denoting the number of citations in a specific year ($t = 1993, \cdots, t = 2002$), and another attribute denoting the average number of citations in that year. \textbf{The objective is to detect a connected subgraph of nodes (papers) whose citations are abnormally high in comparison with the citations of nodes outside the subgraph.} This subgraph is considered an indicator of a potential emerging research area. The data before 1999 is considered as training data, and the data from 1999 to 2002 is considered as testing data.

\begin{table*}[t]
\small
\footnotesize
\centering

\begin{tabular}{|c | c | c | c | c | c |c | c | c |}
\hline
 &\multicolumn{4}{c|}{WaterNetwork}&\multicolumn{4}{c|}{CitHepPh}\\
\cline{2-9}
  & Kulldorff & EMS & EBP & \shortstack{Run Time \\ (sec.)} & Kulldorff & EMS & EBP & Run Time\\
\hline
Our Method   &\textbf{1668.14} & 499.97 & \textbf{4032.48} & 40.98 & \textbf{13859.12} & \textbf{142656.84} & \textbf{9494.62} & 97.21\\
\texttt{GenFusedLasso}  & 541.49 & 388.04 & 3051.22 & 901.51 & 2861.6 & 60952.57 & 6472.84 & 947.07 \\
\texttt{EdgeLasso} & 212.54 & 308.11 & 1096.17 & 70.06 & 39.42 & 2.0675.89 & 261.71 & 775.61\\
\texttt{GraphLaplacian} & 272.25 & 182.95 & 928.41 & 228.45 & 1361.91 & 29463.52 & 876.31 & 2637.65\\
\texttt{LTSS} & 686.78 & 479.40 & 1733.11 & \textbf{1.33} & 11965.14 & 137657.99 & 9098.22 & \textbf{6.93}\\
\texttt{EventTree} & 1304.4 & 744.45 & 3677.58 & 99.27 & 10651.23 & 127362.57 & 8295.47 & 100.93\\
\texttt{AdditiveGraphScan} & 1127.02 & \textbf{761.08} & 2794.66 & 1985.32 & 12567.29 & 140078.86 & 9282.28 & 2882.74\\
\texttt{DepthFirstGraphScan} & 1059.10 & 725.65 & 2674.14 & 8883.56 & 7148.46 & 62774.57 & 4171.47 & 9905.45\\
\texttt{NPHGS} & 686.78 & 479.40 & 1733.11 & 1339.46 & 12021.85 & 137963.5 & 9118.96 & 1244.80\\
\hline
\end{tabular}
\caption{Comparison on scores of the three graph scan statistics based on connected subgraphs returned by comparison methods. EMS and EBP refer to Elevated Mean Scan Statistic and Expectation-Based Poisson Statistic, respectively.}
\label{table:comparison}
\end{table*}

{\bf Graph Scan Statistics: } Three graph scan statistics were considered, including Kulldorff's scan statistic~\cite{neill2012fast}, expectation-based Poisson scan statistic (EBP)~\cite{neill2012fast}, and elevated mean scan statistic (EMS, Equation (\ref{emss}))~\cite{qian2014connected}. 
The first two require that each node has an observed count of events at that node, and an expected count. For the water network dataset, the report of the sensor (0 or 1) at each node is considered as the observed count, and the noise ratio is considered as the expected count. For the CiteHepPh dataset, the number of citations is considered as the observed count, and the average number of citations is considered as the expected count. For the EMS statistic, we consider the ratio of observed and expected counts as the feature. 

{\bf Comparison Methods:} Seven state-of-the-art baseline methods are considered, including \texttt{EdgeLasso}~\cite{sharpnack2012sparsistency}, \texttt{GraphLaplacian}~\cite{sharpnack2012changepoint}, LinearTimeSubsetScan (\texttt{LTSS})~\cite{neill2012fast}, \texttt{EventTree}~\cite{RozenshteinAGT14}, \texttt{AdditiveGraphScan}~\cite{conf/icdm/SpeakmanZN13}, \texttt{DepthFirstGraphScan}~\cite{Speakman-14}, and \texttt{NPHGS}~\cite{DBLP:conf/kdd/ChenN14}. We
 followed strategies recommended by authors in their original
papers to tune the related model parameters. Specifically, for EventTree and Graph-Laplacian, we tested the set of $\lambda$ values: $\{ 0.02, 0.04, \cdots, 2.0\}$. \texttt{DepthFirstScan} is an exact search algorithm and has an exponential time cost in the worst case scenario. We hence set a constraint on the depth of its search to 10 in order to reduce its time complexity. 


We also implemented the generalized fused lasso model (\texttt{GenFusedLasso}) for these three graph scan statistics  using the framework of alternating direction method of multipliers (ADMM). \texttt{GenFusedLasso} is formalized as 
\begin{eqnarray}
\min_{{\bf x} \in \mathbb{R}^n} -f({\bf x}) + \lambda \sum\nolimits_{(i, j) \in \mathbb{E}} \|x_i - x_j\|,
\end{eqnarray}
\noindent where $f({\bf x})$ is a predefined graph scan statistic and the trade-off parameter $\lambda$ controls the degree of smoothness of neighboring entries in ${\bf x}$. We applied the heuristic rounding step proposed in~\cite{qian2014connected} to the numerical vector ${\bf x}$ to identify the connected subgraph. We tested the $\lambda$ values: $\{0.02, 0.04, \cdots, 2.0, 5.0, 10.0\}$ and returned the best result. 

{\bf Our Proposed Method \textsc{Graph}-\textsc{Mp}: } Our proposed \textsc{Graph}-\textsc{Mp} has a single parameter $k$, an upper bound of the subgraph size. We set $k = 1000$ by default, as the sizes of subgraphs of interest are often small; otherwise, the detection problem could be less challenging. We note that, to obtain the best performance of our proposed method \textsc{Graph}-\textsc{Mp}, we should try a set of different $k$ values ($k=50, 100, 200, 300, \cdots, 1000$) and return the best.  

{\bf Performance Metrics: } The overall scores of the three graph scan statistics of the connected subgraphs returned by the competitive methods were compared and analyzed. The objective is to identify methods that could find the connected subgraphs with the largest scores. The running times of different methods are compared.

\begin{figure*}[t]
  \centering
  \subfigure[WaterNetwork(Kulldorff)]{
    \includegraphics[width=0.35\textwidth]{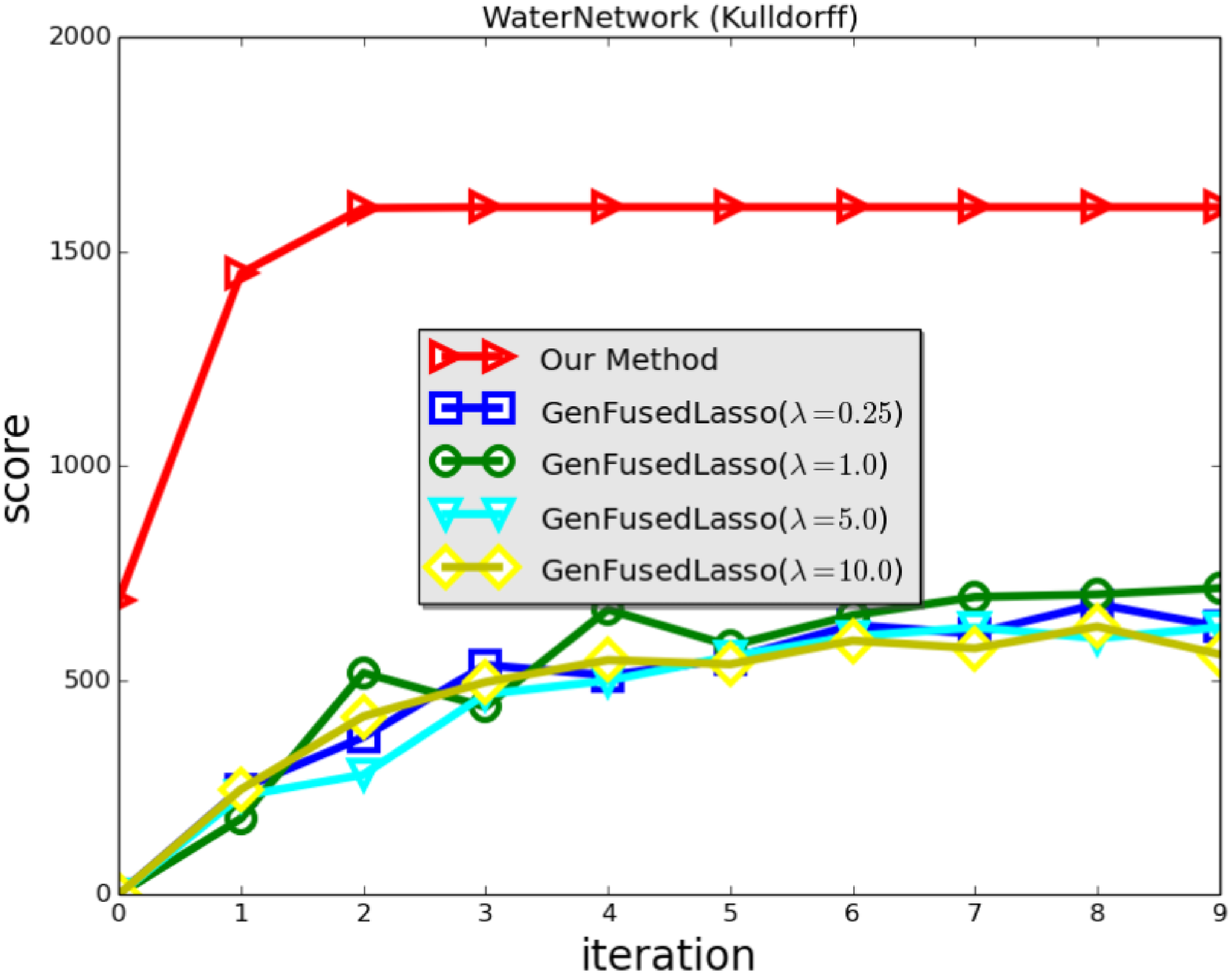}}\hspace{7em}
  \subfigure[CitHepPh(EMS)]{
    \includegraphics[width=0.35\textwidth]{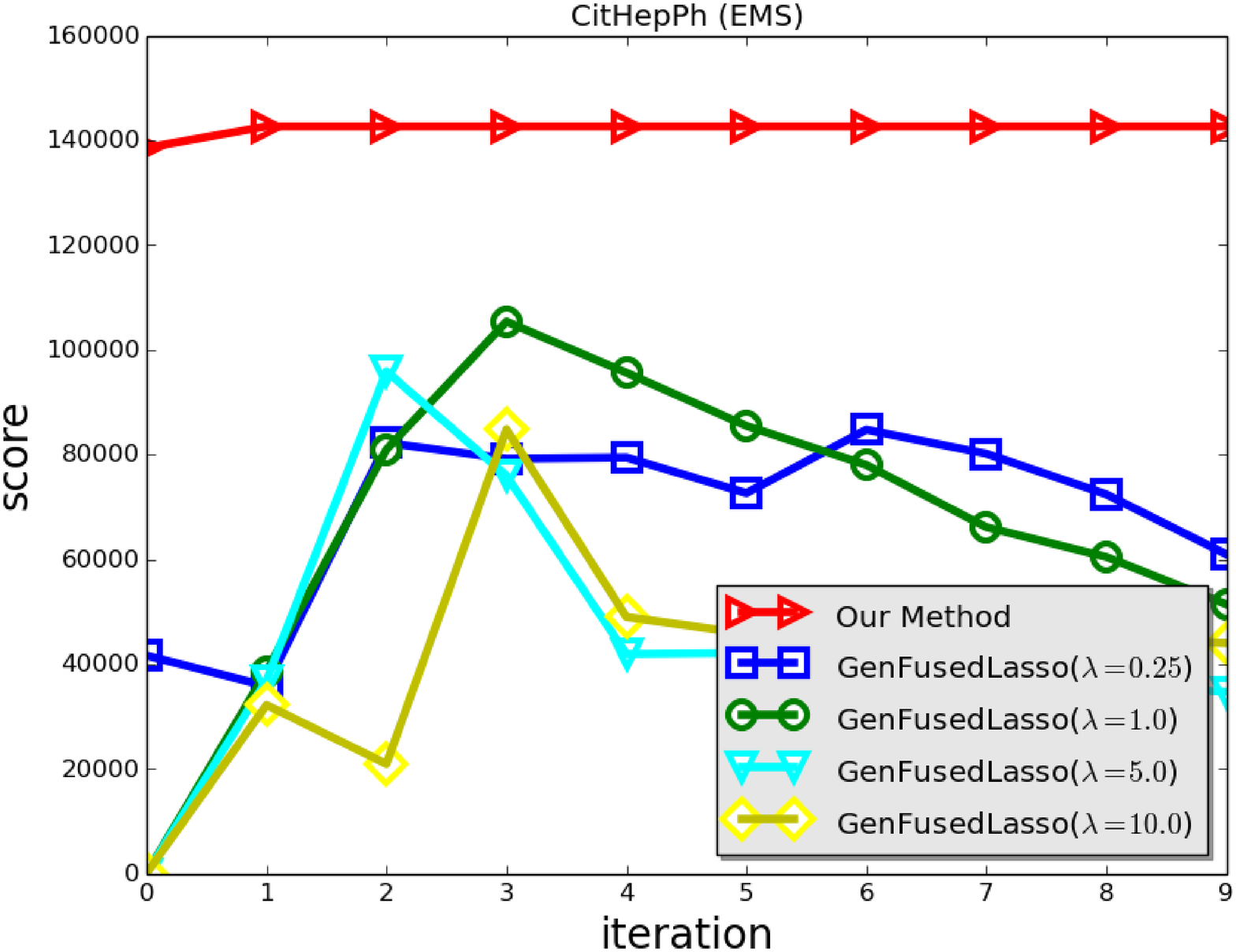}}
  \caption{Evolving curves of graph scan statistic scores between our method and \texttt{GenFusedLasso}.}
  \label{fig:comparsion-of-iterations} 
\end{figure*}

\subsection{Evolving Curves of Graph Scan Statistics}

Figure~\ref{fig:comparsion-of-iterations} presents the comparison between our method and \texttt{GenFusedLasso} on the scores of the best connected subgraphs that are identified at different iterations based on the Kulldorff's scan statistic and the EMS statistic. Note that, a heuristic rounding process as proposed in~\cite{qian2014connected} was applied to the numerical vector ${\bf x}^i$ estimated by \texttt{GenFusedLasso}  in order to identify the best connected subgraph at each iteration $i$. As the setting of the parameter $\lambda$ will influence the quality of the detected connected subgraph, the results based on  different $\lambda$ values are also shown in Figure~\ref{fig:comparsion-of-iterations}.  We observe that our proposed method \textsc{Graph}-\textsc{Mp} converged in less than 5 steps and the qualities (scan statistic scores) of the connected subgraphs identified \textsc{Graph}-\textsc{Mp} at different iterations were consistently higher than those returned by \texttt{GenFusedLasso}. 

\subsection{Comparison on Optimization Quality}
The comparison between our method and the other eight baseline methods is shown in Table~\ref{table:comparison}. The scores of the three graph scan statistics of the connected subgraphs returned by these methods are reported in this table. The results in indicate that our method outperformed all the baseline methods on the scores, except that \texttt{AdditiveGraphScan} achieved the highest EMS score (761.08) on the water network data set. Although \texttt{AdditiveGraphScan} performed reasonably well in overall, this algorithm is a heuristic algorithm and does not have theoretical guarantees. 

\subsection{Comparison on Time Cost}
Table~\ref{table:comparison} shows the time costs of all competitive methods on the two benchmark data sets. The results indicate that our method was the second fastest algorithm over all the comparison methods. In particular, the running times of our method were 10+ times faster than the majority of the methods. 

\section{Conclusion and Future Work}
This paper presents, \textsc{Graph}-\textsc{Mp}, an efficient algorithm to minimize a general nonlinear function subject to graph-structured sparsity constraints. For the future work, we plan to explore graph-structured constraints other than connected subgraphs, and analyze  theoretical properties of  \textsc{Graph}-\textsc{Mp} for cost functions that do not satisfy the WRSC condition. 

\section{Acknowledgements}
This work is supported by the Intelligence Advanced
Research Projects Activity (IARPA) via Department of
Interior National Business Center (DoI/NBC) contract
D12PC00337. The views and conclusions contained herein
are those of the authors and should not be interpreted as necessarily
representing the official policies or endorsements,
either expressed or implied, of IARPA, DoI/NBC, or the US
government.

\bibliographystyle{ijcai16}

\end{document}